\documentclass[letterpaper, 10 pt, conference]{ieeeconf}  %

\IEEEoverridecommandlockouts                              %

\usepackage{hyperref}
\usepackage{url}
\usepackage{graphicx}
\usepackage[ruled, vlined,linesnumbered]{algorithm2e}
\usepackage{mathtools}
\usepackage{dsfont}
\usepackage{booktabs}
\usepackage{subfig}
\usepackage{listings}
\usepackage{amssymb}
\usepackage{xspace}
\usepackage{wrapfig}
\usepackage{cuted} %
\usepackage{cite}
\usepackage{caption}
\usepackage{multirow}
\usepackage{censor}
\usepackage{placeins}

\usepackage{xcolor} %
 \definecolor{mygreen}{rgb}{0,0.6,0}
 \definecolor{mygray}{rgb}{0.5,0.5,0.5}
 \definecolor{mymauve}{rgb}{0.58,0,0.82}
 \definecolor{backcolour}{rgb}{0.95,0.95,0.92}

\usepackage{colortbl}

\newtheorem{theorem}{Theorem}[section]

\newtheorem{definition}[theorem]{Definition}
\newtheorem{lemma}[theorem]{Lemma}

\let\oldthebibliography\thebibliography
\let\endoldthebibliography\endthebibliography
\renewenvironment{thebibliography}[1]{%
  \oldthebibliography{#1}\interlinepenalty=10000
}{\endoldthebibliography}

\newcommand{\dadiff}{\textsc{DADiff}\xspace}

\title{\LARGE \bf
\dadiff: Diffusion-Driven Cross-Domain Policy Adaptation for Reinforcement Learning
}

\author{Hanyang Chen$^{1}$, Anirudh Satheesh$^{2}$, Longchao Da$^{1}$ and Hua Wei$^{1}$ %
\thanks{$^{1}$Hanyang Chen, Longchao Da, and Hua Wei are with Arizona State University, {\tt\small \{hchen478, longchao, hua.wei\}@asu.edu}}%
\thanks{$^{2}$Anirudh Satheesh is with the University of Maryland, College Park, {\tt\small anirudhs@terpmail.umd.edu}}%
}

\begin{document}

\maketitle
\thispagestyle{empty}
\pagestyle{empty}

\begin{abstract}

Transferring policies across domains poses a vital challenge in reinforcement learning, due to the dynamics mismatch between the source and target domains. In this paper, we consider the setting of online dynamics adaptation, where policies are trained in the source domain with sufficient data, while only limited interactions with the target domain are allowed. There are a few existing works that address the dynamics mismatch by employing domain classifiers, value-guided data filtering, or representation learning. Instead, we study the domain adaptation problem from a generative modeling perspective. Specifically, we introduce \dadiff, a diffusion-based framework that leverages the discrepancy between source and target domain generative trajectories in the generation process of the next state to estimate the dynamics mismatch. Both reward modification and data selection variants are developed to adapt the policy to the target domain. We also provide a theoretical analysis to show that the performance difference of a given policy between the two domains is bounded by the generative trajectory deviation. More discussions on the applicability of the variants and the connection between our theoretical analysis and the prior work are further provided. We conduct extensive experiments in environments with various shifts to validate the effectiveness of our method. The results demonstrate that our method provides superior performance compared to existing approaches, effectively addressing the dynamics mismatch. We provide the code of our method at \url{https://github.com/hanyang-chen/DADiff-release}.

\end{abstract}

\section{Introduction}
\label{sec:introduction}

Reinforcement learning (RL) has shown strong potential in complex decision-making tasks, but training directly in the real-world environment (\textit{target domain}) is often restricted by safety, cost, and limited interaction budgets. An alternative strategy is to train policies in a surrogate environment (\textit{source domain}), such as a simulator, and then transfer them to the target domain. But due to the dynamics mismatch between the source and target domains, directly transferring the policy often leads to performance degradation, which is a critical challenge in the sim-to-real problem \cite{zhao2020sim, da2025survey}. One solution to this transfer problem is known as \textit{online dynamics adaptation} \cite{xu2023crossdomain, lyu2024cross}, where policies are trained with abundant source-domain data and only limited interactions in the target domain. In this setting, the state space, action space, and reward function remain consistent across domains, while the transition dynamics differ. Compared with solutions such as domain randomization \cite{peng2018sim, mehta2020active, curtis2025flowbased} or simulator calibration \cite{chebotar2019closing}, online dynamics adaptation does not require access to high-fidelity simulators or prior knowledge of target dynamics, and can therefore be applied in situations where such information is unavailable.

Existing online dynamics adaptation methods, including classifier-based approaches \cite{eysenbach2021offdynamics}, value-guided filtering \cite{xu2023crossdomain}, and representation learning \cite{lyu2024crossdomain}, capture dynamics discrepancy from different perspectives: classifiers provide coarse distinctions between domains, value-guided methods depend on the modeling of forward predictions, and representation learning relies on assumptions of invariant latent structures across domains. When the domains are complex or stochastic, a key challenge that remains is to develop an approach capable of capturing dynamics discrepancy in a more fine-grained and distributional manner.

The generative modeling perspective provides a potential direction. Generative models, such as diffusion models \cite{sohl2015deep, ho2020denoising}, have demonstrated strong capability in representing complex distributions. When state transitions are viewed as a conditional generative process, the mismatch between source and target domains can be interpreted as a discrepancy between their respective generative processes. Specifically, the multi-step sampling procedure in diffusion models and flow matching methods produces several latent states, which construct a generative trajectory, serving as structured signals of source–target dynamics deviation. These latent states allow the discrepancy to be captured not only at the next-state level but also along the entire trajectory. Intuitively, if the source and target domains follow different dynamics, their trajectories will diverge at multiple steps, a phenomenon we term \emph{generative trajectory deviation}. This notion provides a fine-grained view of dynamics discrepancy by revealing how divergence accumulates along the trajectory, rather than relying solely on local or aggregated comparisons. Our theoretical analysis further connects trajectory deviation to performance guarantees, providing motivation for algorithmic design.

Building on this perspective, we introduce \dadiff, a diffusion-based framework for online dynamics adaptation. \dadiff leverages latent states in diffusion models to measure generative trajectory deviation between source and target domains, and exploits this deviation in two complementary ways: (i) \dadiff-modify, which adjusts source-domain rewards with deviation-based penalties, and (ii) \dadiff-select, which filters source-domain data based on deviation before value function updates. We further discuss the applicability of these variants to different tasks, highlight the advantages of our method compared to prior work, and establish a connection between our analysis and the theoretical guarantee of prior work. Empirical results in environments with various shifts show the superior performance of our method compared to existing algorithms.

\section{Related Works}
\label{sec:related-works}

\paragraph{Domain Adaptation in RL}
\label{sec:domain-adaptation-rl}
Generalizing RL policies to diverse environments is critical for real-world deployment, where transition dynamics \cite{eysenbach2021offdynamics, xue2023state}, state or action spaces \cite{ge2023policy, pan2025crossdomain} may be different. To address domain adaptation, prior work falls under three categories: (i) domain randomization that randomizes transition dynamics to expose agents to many environment configurations \cite{slaoui2019robust, jiang2023variance}, (ii) meta-learning to few-shot adapt to many environments \cite{nagabandi2018learning, wu2022zero}, and (iii) expert demonstrations of target environments through imitation learning \cite{raychaudhuri2021cross, fickinger2022crossdomain}. However, these approaches are either computationally expensive or require hard-to-obtain demonstrations. With only limited target-domain data, some works perform reward modifications to transition to the target domain by using transition classifiers \cite{eysenbach2021offdynamics, guo2024off} or reward augmentations \cite{van2024policy, lyu2024cross}. Data selection methods \cite{xu2023crossdomain, wen2024contrastive} have also been used to filter out part of the source-domain transitions and train policies on both source and target domain data. When the domains are complex or stochastic, a key challenge that remains is to develop an approach capable of capturing the dynamics discrepancy. Our method explores this challenge from a generative modeling perspective by measuring the generative trajectory deviation between the source and target domains.

\paragraph{Diffusion Models in RL}
\label{sec:diffusion-models-rl}
Diffusion models \cite{sohl2015deep, ho2020denoising} have been extensively used for generating effective decision-making policies in several domains, such as reinforcement learning \cite{kang2023efficient} and robotics \cite{chi2025diffusion}. Specifically, they are widely leveraged to synthesize data for offline RL \cite{lu2023synthetic}, facilitate planning and action generation in multi-task scenarios \cite{he2023diffusion}, and enhance the representational capacity of learned RL policies \cite{wang2024diffusion}. In addition, diffusion models have also been extended to the multi-agent settings \cite{zhu2024madiff}. In the field of domain adaptation, they are utilized to augment the target-domain data in order to boost the performance of offline RL policies \cite{van2025dmc}. However, the introduction of synthesizers may lead to extra computational costs, and the quality of synthesized data is hard to guarantee. In contrast, we choose to directly estimate the dynamics discrepancy by multiple latent states from diffusion models instead of generating more synthetic data.

\section{Preliminaries}
\label{sec:preliminaries}

\paragraph{Online Dynamics Adaptation}
\label{sec:off-dynamics-rl}

We consider two Markov Decision Processes (MDPs), denoted as $\mathcal{M}_\mathrm{src} = (\mathcal{S}, \mathcal{A}, P_\mathrm{src}, r, \gamma)$ and $\mathcal{M}_\mathrm{tar} = (\mathcal{S}, \mathcal{A}, P_\mathrm{tar}, r, \gamma)$ for the source domain and target domain, respectively. The state space $\mathcal{S}$, action space $\mathcal{A}$, reward function $r:\mathcal{S} \times \mathcal{A} \rightarrow \mathbb{R}$ and discount factor $\gamma \in [0, 1]$ are consistent across both domains, while the transition dynamics
$P_\mathrm{src}$ and $P_\mathrm{tar}$ differ. The goal of online dynamics adaptation is to learn a policy $\pi$ that achieves high performance in the target domain $\mathcal{M}_\mathrm{tar}$, utilizing sufficient data from the source domain and only limited interactions from the target domain.
In addition, we specify a domain $\mathcal{M}$ and define the probability that a policy $\pi$ encounters a state $s$ at time step $t$ as $P_{\mathcal{M},t}^{\pi}(s)$.
Therefore, the normalized probability that a policy $\pi$ visits a state-action pair $(s,a)$ in the domain $\mathcal{M}$ can be represented as $\rho_{\mathcal{M}}^{\pi}(s,a) \coloneqq (1-\gamma) \sum_{t=0}^{\infty} \gamma^t P_{\mathcal{M},t}^{\pi}(s) \pi(a|s)$. The expected return of a policy $\pi$ in $\mathcal{M}$ is defined as $\eta_{\mathcal{M}}(\pi) = \mathbb{E}_{(s,a) \sim \rho_{\mathcal{M}}^{\pi}}[r(s,a)]$. We assume the reward are bounded by $|r(s,a)| \le r_\mathrm{max}, \forall s \in \mathcal{S}, a \in \mathcal{A}$.

\paragraph{Diffusion Models}
\label{sec:diffusion-models}

Diffusion models \cite{sohl2015deep, ho2020denoising} are a family of generative models that learn to generate samples from a target distribution. We mainly focus on the denoising diffusion probabilistic model (DDPM) \cite{ho2020denoising} in this paper.
DDPM consists of a forward process and a reverse process. The forward process is regarded as a Markov chain that gradually adds noise to data, transforming a clean data point $x_0$ into Gaussian noise, which is formulated as follows,
\begin{equation}
    x_k = \sqrt{1 - \beta_k} x_{k-1} + \sqrt{\beta_k} \epsilon, \quad \epsilon \sim \mathcal{N}(0, I),
\end{equation}
where $x_k$ is the noisy data at diffusion timestep $k$, $\beta_k$ is the noise schedule, and $\epsilon$ is Gaussian noise.
To simplify the forward process, we can directly sample the noisy data at diffusion timestep $k$ as follows,
\begin{equation}
    x_k = \sqrt{\bar{\alpha}_k} x_0 + \sqrt{1 - \bar{\alpha}_k} \epsilon, \quad \epsilon \sim \mathcal{N}(0, I),
\end{equation}
where $\alpha_k = 1 - \beta_k$ and $\bar{\alpha}_k = \prod_{i=1}^k \alpha_i$. 
The reverse process learns to denoise the noisy data step by step, which is formulated as follows,
\begin{equation}
    \resizebox{\linewidth}{!}{$
    \begin{aligned}
    x_{k-1} &= \frac{1}{\sqrt{\alpha_k}} \left(x_k - \frac{\beta_k}{\sqrt{1 - \bar{\alpha}_k}} \epsilon_\theta(x_k, k)\right) + \sqrt{\frac{1 - \bar{\alpha}_{k-1}}{1 - \bar{\alpha}_k}\beta_k}\,\epsilon,\quad \epsilon \sim \mathcal{N}(0, I),
    \end{aligned}
    $}
\end{equation}
where $\epsilon_\theta(x_k, k)$ is a noise model that estimates the noise from the noisy data point $x_k$. The noisy data points $\{x_k\}_{k=0}^K$ form a generative trajectory from the initial noisy data $x_K$ to the clean data $x_0$.
The training objective of the noise model is formulated as follows,
\begin{equation}
    \mathcal{L}_\mathrm{diff} = \mathbb{E}_{x_0, \epsilon, k} \left[ ||\epsilon - \epsilon_\theta(\sqrt{\bar{\alpha}_k} x_0 + \sqrt{1 - \bar{\alpha}_k} \epsilon, k)||^2 \right].
    \label{eq:training-objective-diffusion-models}
\end{equation}

\section{Methodology}
\label{sec:method}

\begin{figure}
    \centering
    \vspace{0.15cm}
      \setlength{\belowcaptionskip}{-0.4cm}
      \includegraphics[width=1.0\linewidth]{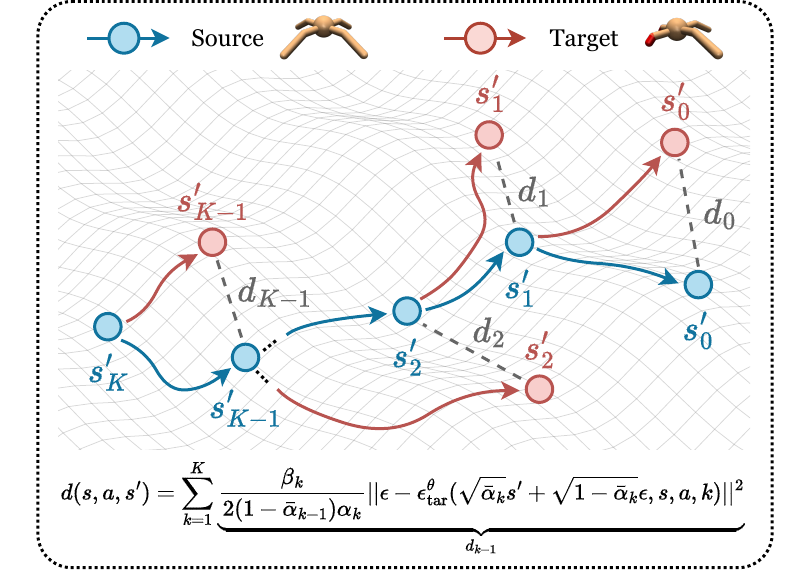}
      \caption{
        Illustration of \dadiff. This figure visualizes the generative trajectories in the source and target domains. The deviation $d(s,a,s')$ is measured by the discrepancy $d_k$ of each latent state $s'_k$ in the source and target domain generative trajectories. %
      }
      \label{fig:model-architecture}
\end{figure}

In this section, we first introduce a theoretical analysis to demonstrate the connection between the dynamics mismatch and the generative trajectory mismatch. Then, we present our diffusion-based method, \dadiff, which measures the generative trajectory deviation from the perspective of diffusion models and adapts the learned policy to the target domain.
The overview of our method is shown in Figure~\ref{fig:model-architecture}.

\subsection{Theoretical Analysis}
\label{sec:theoretical-analysis}

Before introducing the theoretical analysis, we first provide the definition of a generative trajectory, which is crucial for the analysis. For clarity, we denote the next state $s'$ as $s'_0$.

\begin{definition}
{
    (Generative trajectory.)
    Specify a domain $\mathcal{M}$ with transition dynamics $P_\mathcal{M}(s'_0 | s, a)$. There is a generative trajectory for the next state $s'_0$ consisting of $K$ auxiliary variables $\{s'_k\}_{k=1}^K$, referred to as latent states. These latent states form a Markov chain from the initial latent state $s'_K$ to the next state $s'_0$ conditioned on the state–action pair $(s,a)$.
    \label{definition:bridge-states}
}
\end{definition}

\textbf{\textit{Remark.}}
The Markov-chain definition enables the transition dynamics to be decomposed into multiple conditional probabilities, \textit{i.e.}, $P_\mathcal{M}(s'_0 | s, a) = \int P_\mathcal{M}(s'_K | s, a) \prod_{k=1}^{K} P_\mathcal{M}(s'_{k-1} | s'_k, s, a) ds'_{1:K}$. In this way, the next state $s'_0$ can be viewed as being generated step by step with latent states, forming a generative trajectory. The discrepancy of such generative trajectories across domains provides a natural estimation of the dynamics discrepancy.

We construct generative trajectories in both source and target domains, starting from the same initial latent state $s'_K$, and derive Theorem~\ref{theorem:performance-bound-latent-state-discrepancy} to establish the connection between the dynamics mismatch and the generative trajectory mismatch. The detailed proof is provided in Appendix~\ref{sec:proof-theorem}.

\begin{theorem}{
    (Performance bound controlled by generative trajectory discrepancy.)
    Denote $\mathcal{M}_\mathrm{src}$ and $\mathcal{M}_\mathrm{tar}$ as the source and target domains with different dynamics, respectively. The performance difference of any policy $\pi$ evaluated in $\mathcal{M}_\mathrm{src}$ and $\mathcal{M}_\mathrm{tar}$ can be bounded as below,
        \begin{equation}
        \resizebox{\linewidth}{!}{$
            \begin{aligned}
            & \eta_{\mathcal{M}_\mathrm{src}}(\pi) - \eta_{\mathcal{M}_\mathrm{tar}}(\pi)
            \le \\
            & \frac{\sqrt{2}\gamma r_{\mathrm{max}}}{(1 - \gamma)^2}
            \underbrace{\mathbb{E}_{\rho_\mathrm{src}^\pi}\!\left[
            \sqrt{\mathbb{E}_{P_\mathrm{src}}\!\left[
            D_\mathrm{KL}\!\left(P_\mathrm{src}(s'_K \mid s, a)\,\|\,P_\mathrm{tar}(s'_K \mid s, a)\right)
            \right]}
            \right]}_{(a):\ \mathrm{initial\ latent\ state\ deviation}} + \\
            & \frac{\sqrt{2}\gamma r_{\mathrm{max}}}{(1 - \gamma)^2}
            \underbrace{\mathbb{E}_{\rho_\mathrm{src}^\pi}\!\left[
            \sqrt{\mathbb{E}_{P_\mathrm{src}}\!\left[
            \sum_{k=1}^{K}
            D_\mathrm{KL}\!\left(P_\mathrm{src}(s'_{k-1} \mid s'_k, s, a)\,\|\,P_\mathrm{tar}(s'_{k-1} \mid s'_k, s, a)\right)
            \right]}
            \right]}_{(b):\ \mathrm{latent\ state\ transition\ mismatch}}.
            \end{aligned}
            $}
            \label{eq:performance-bound-latent-state-discrepancy}
        \end{equation}
    
    \label{theorem:performance-bound-latent-state-discrepancy}
}
\end{theorem}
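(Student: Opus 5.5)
The plan is to combine a standard simulation-lemma bound with Pinsker's inequality and the chain rule for KL divergence along the generative trajectory.

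\textbf{Step 1: simulation lemma.} I would start from the telescoping (simulation-lemma) identity for two MDPs that share rewards but differ in dynamics. Write $V^\pi_{\mathrm{tar}}$ for the value function of $\pi$ in $\mathcal{M}_\mathrm{tar}$. Then
\[
\eta_{\mathcal{M}_\mathrm{src}}(\pi)-\eta_{\mathcal{M}_\mathrm{tar}}(\pi)=\frac{\gamma}{1-\gamma}\,\mathbb{E}_{(s,a)\sim\rho^\pi_{\mathrm{src}}}\Big[\mathbb{E}_{P_\mathrm{src}(s'|s,a)}V^\pi_{\mathrm{tar}}(s')-\mathbb{E}_{P_\mathrm{tar}(s'|s,a)}V^\pi_{\mathrm{tar}}(s')\Big].
\]
The normalization depends on how $\eta$ is scaled. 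Here $\eta=\mathbb{E}_{\rho}[r]$ with $\rho$ normalized by $(1-\gamma)$, so the prefactor must be checked against this convention. Since $|V^\pi_{\mathrm{tar}}|\le r_{\max}/(1-\gamma)$, the inner difference is bounded by $\frac{2r_{\max}}{1-\gamma}\,D_{\mathrm{TV}}\big(P_\mathrm{src}(\cdot|s,a),P_\mathrm{tar}(\cdot|s,a)\big)$. Combined with the $\frac{\gamma}{1-\gamma}$ factor, this produces the $\frac{\gamma r_{\max}}{(1-\gamma)^2}$ constant up to a factor of $2$. That factor is absorbed in the next step using the convention $D_{\mathrm{TV}}=\tfrac12\|\cdot\|_1$.

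\textbf{Step 2: lift to joint trajectories and apply Pinsker.} The next-state TV is bounded by the TV between the joint distributions over the whole generative trajectory $s'_{0:K}$, by data processing under marginalization. Pinsker gives $D_{\mathrm{TV}}\le\sqrt{\tfrac12 D_{\mathrm{KL}}}$. With the factor $2$ from Step 1 this yields $\sqrt{2}\,\sqrt{D_{\mathrm{KL}}(P_\mathrm{src}(s'_{0:K}|s,a)\,\|\,P_\mathrm{tar}(s'_{0:K}|s,a))}$, matching the $\sqrt{2}$ in the statement.

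\textbf{Step 3: chain rule along the Markov chain.} By Definition~\ref{definition:bridge-states} the joint factorizes as $P(s'_K|s,a)\prod_k P(s'_{k-1}|s'_k,s,a)$. The KL chain rule therefore gives the joint KL as
\[
D_{\mathrm{KL}}(P_\mathrm{src}(s'_K|s,a)\|P_\mathrm{tar}(s'_K|s,a))+\sum_{k=1}^K\mathbb{E}_{P_\mathrm{src}}\big[D_{\mathrm{KL}}(P_\mathrm{src}(s'_{k-1}|s'_k,s,a)\|P_\mathrm{tar}(s'_{k-1}|s'_k,s,a))\big],
\]
where the expectation is over source-trajectory latents. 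The first term is deterministic in $(s,a)$, so wrapping it in $\mathbb{E}_{P_\mathrm{src}}$ is harmless and matches the statement. Finally, $\sqrt{x+y}\le\sqrt{x}+\sqrt{y}$ separates the square root into terms (a) and (b), and linearity of $\mathbb{E}_{\rho^\pi_{\mathrm{src}}}$ finishes the argument.

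\textbf{Main obstacle.} The delicate step is Step 1's constant bookkeeping: getting exactly $\sqrt{2}\gamma r_{\max}/(1-\gamma)^2$ rather than an extra factor of $2$ or a missing $(1-\gamma)$. I would first try the bound $|V|\le r_{\max}/(1-\gamma)$ with the span-based refinement $|\mathbb{E}_P V-\mathbb{E}_Q V|\le \|V\|_\infty\|P-Q\|_1 = 2\|V\|_\infty D_{\mathrm{TV}}$, and reconcile it with the $(1-\gamma)$ normalization in $\rho$. The remaining steps are routine. The "same initial latent state $s'_K$" remark only motivates term (a); it is not needed in the proof.
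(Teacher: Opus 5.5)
Your proposal is correct and follows essentially the same route as the paper's proof: the telescoping lemma, the bound $|V^\pi_{\mathrm{tar}}|\le r_{\max}/(1-\gamma)$, lifting to the joint law over $s'_{0:K}$, Pinsker, the Markov chain rule for KL, and $\sqrt{x+y}\le\sqrt{x}+\sqrt{y}$. Your lifting step is stated more accurately than in the paper: it is a data-processing inequality, whereas the paper writes an equality. Your normalization worry also resolves in your favor, since with $\eta=\mathbb{E}_{\rho}[r]$ and unnormalized $V^\pi_{\mathrm{tar}}$ the telescoping prefactor is $\gamma$ rather than $\gamma/(1-\gamma)$, so the constant comes out smaller by a factor $1-\gamma$ and the stated bound follows a fortiori.
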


\textbf{\textit{Remark.}}
This bound indicates that the performance difference of a policy $\pi$ between the source and target domains is controlled by the initial latent state deviation term (a) and the latent state transition mismatch term (b). Since the generative trajectories in both the source and target domains share the same initial latent state $s'_K$, term (a) vanishes, leaving term (b) as the sole determinant of the performance difference. In other words, as long as the generative trajectories are similar in the source and target domains, the performance difference is small, and vice versa. We note that PAR \cite{lyu2024crossdomain} can be considered as a special case of Theorem~\ref{theorem:performance-bound-latent-state-discrepancy} when $K=1$.
A discussion on the connection between our analysis and the theoretical guarantee of PAR is provided in Section~\ref{sec:connection-dadiff-par}.

\subsection{Domain Adaptation with Diffusion}
\label{sec:domain-adaptation}

Theorem~\ref{theorem:performance-bound-latent-state-discrepancy} provides a theoretical guarantee linking the performance difference of a policy $\pi$ to the generative trajectory, thereby motivating a careful design of latent states in the trajectory. In this section, we adopt the formulation of DDPM to better characterize the dynamics discrepancy. 

We first redeclare the reverse process of DDPM in a reparameterized form to describe the latent state transition in domain $\mathcal{M}$ as follows,
\begin{equation}
    \resizebox{\linewidth}{!}{$
    \begin{aligned}
    s'_{k-1} = \frac{1}{\sqrt{\alpha_k}} \left(s'_k - \frac{\beta_k}{\sqrt{1 - \bar{\alpha}_k}} \epsilon_\mathcal{M}(s'_k, s, a, k)\right) + \sqrt{\frac{1 - \bar{\alpha}_{k-1}}{1 - \bar{\alpha}_k}\beta_k}\,\epsilon,\quad \epsilon \sim \mathcal{N}(0, I),
    \end{aligned}
    $}
    \label{eq:latent-state-transition-ddpm}
\end{equation}
where $\epsilon_\mathcal{M}(s'_k, s, a, k)$ is the noise from the latent state $s'_k$ in domain $\mathcal{M}$. It indicates that the latent state transition follows a Gaussian distribution, \textit{i.e.}, 
\begin{equation}
    \resizebox{\linewidth}{!}{$
    \begin{aligned}
    P_\mathcal{M}(s'_{k-1} & \mid s'_k, s, a) 
    &\sim \mathcal{N}\Bigg(
    \frac{1}{\sqrt{\alpha_k}} \Big(s'_k - \frac{\beta_k}{\sqrt{1 - \bar{\alpha}_k}} \epsilon_\mathcal{M}(s'_k, s, a, k)\Big), \frac{1 - \bar{\alpha}_{k-1}}{1 - \bar{\alpha}_k}\beta_k I
    \Bigg).
    \end{aligned}
    $}
    \label{eq:latent-state-transition-ddpm-distribution}
\end{equation}
According to Theorem~\ref{theorem:performance-bound-latent-state-discrepancy}, the performance difference of a policy $\pi$ across domains is determined by the latent state transition mismatch term (b). Therefore, we can estimate the generative trajectory deviation $d(s,a,s')$ with the defined distribution of latent state transition in Equation~\ref{eq:latent-state-transition-ddpm-distribution} as follows,
\begin{equation}
    \resizebox{0.95\linewidth}{!}{$
    \begin{aligned}
        d(s,a,s') &= \sum_{k=1}^{K} D_\mathrm{KL}(P_\mathrm{src}(s'_{k-1} | s'_k, s, a) || P_\mathrm{tar}(s'_{k-1} | s'_k, s, a)) \\
        &= \sum_{k=1}^{K} \frac{\beta_k}{2(1 - \bar{\alpha}_{k-1}) \alpha_k} \left\Vert \epsilon_\mathrm{src}(s'_k, s, a, k) - \epsilon_\mathrm{tar}(s'_k, s, a, k) \right\Vert^2.
    \end{aligned}
    $}
    \label{eq:generative-trajectory-deviation}
\end{equation}
We derive this equation by computing the KL divergence between two Gaussian distributions.
Notably, as the state transition tuple $(s, a, s')$ comes from the source domain, the noise $\epsilon_\mathrm{src}(s'_k, s, a, k)$ estimated in the reverse process must be consistent with the noise used in the forward process to generate the latent state $s'_k$, which indicates $\epsilon_\mathrm{src}(s'_k, s, a, k) = \epsilon$ with $\epsilon \sim \mathcal{N}(0, I)$. Besides, we introduce a noise model $\epsilon^\theta_\mathrm{tar}(s'_k, s, a, k)$, trained with target-domain data, to estimate the noise in the target domain. The training objective is formulated as follows,
\begin{equation}
    \resizebox{\linewidth}{!}{$
    \mathcal{L}_\mathrm{noise} = \mathbb{E}_{(s,a,s')\sim\mathcal{D}_\mathrm{tar}, \epsilon, k} \left[ \left\Vert \epsilon - \epsilon^\theta_\mathrm{tar}(\sqrt{\bar{\alpha}_k} s'_0 + \sqrt{1 - \bar{\alpha}_k} \epsilon, s, a, k) \right\Vert^2 \right].
    $}
    \label{eq:training-objective-diffusion-models-rewritten}
\end{equation}
This objective mirrors the standard DDPM training loss, but conditions on $(s,a)$ to capture dynamics in the target domain.
For the latent state $s'_k$ in Equation~\ref{eq:generative-trajectory-deviation}, there are two ways to obtain it: (i) by iteratively applying the reverse process in Equation~\ref{eq:latent-state-transition-ddpm}, and (ii) by sampling directly from the forward process of DDPM, \textit{i.e.}, $s'_k = \sqrt{\bar{\alpha}_k} s'_0 + \sqrt{1 - \bar{\alpha}_k} \epsilon$ with $\epsilon \sim \mathcal{N}(0, I)$. Specifically, the first way requires sequential sampling across all steps to generate the entire generative trajectory, which is computationally expensive. In contrast, the second way can produce all latent states in parallel, yielding a much more efficient implementation. Therefore, we choose to obtain the latent state $s'_k$ via the forward process in our method. 
Finally, the deviation $d(s,a,s')$ can be practically estimated as follows,
\begin{equation}
    \resizebox{\linewidth}{!}{$
    \begin{aligned}
    d(s,a,s') = \sum_{k=1}^{K} \frac{\beta_k}{2(1 - \bar{\alpha}_{k-1}) \alpha_k} \left\Vert \epsilon - \epsilon^\theta_\mathrm{tar}(\sqrt{\bar{\alpha}_k} s'_0 + \sqrt{1 - \bar{\alpha}_k} \epsilon, s, a, k) \right\Vert^2,\\
    \epsilon \sim \mathcal{N}(0, I).
    \end{aligned}
    $}
    \label{eq:estimated-generative-trajectory-deviation}
\end{equation}
We further introduce two variants based on SAC \cite{haarnoja2018soft} to utilize the deviation $d(s,a,s')$, including reward modification and data selection, since we find that baselines adopting these two techniques exhibit complementary advantages in different tasks, which is shown in Section~\ref{sec:adaptation-performance}. We analyze the possible reason for this phenomenon from the reward distribution aspect in Section~\ref{sec:reward-distribution-analysis}. The details of \dadiff variants are provided as follows.

\paragraph{Reward modification.}
We refer to this variant as \dadiff-modify. It adopts the deviation $d(s,a,s')$ as a reward penalty to modify the reward function in the source domain, \textit{i.e.},
\begin{equation}
    r_\mathrm{mod}(s,a,s') = r(s,a,s') - \lambda d(s,a,s'),
    \label{eq:reward-modification}
\end{equation}
where $\lambda$ is a penalty coefficient to balance the original reward and the penalty. The objective function for training the value function gives,
\begin{equation}
    \mathcal{L}_\mathrm{critic} = \mathbb{E}_{(s, a, r_\text{mod}, s') \sim \mathcal{D}_\mathrm{src} \cup \mathcal{D}_\mathrm{tar}} \left[ (Q_\phi - \mathcal{T}Q_\phi)^2 \right],
    \label{eq:critic-objective-modification}
\end{equation}
where $\mathcal{D}_\mathrm{tar}$ and $\mathcal{D}_\mathrm{src}$ are the datasets from the target and source domains, respectively, $Q_\phi$ is the value function, and $\mathcal{T}$ is the Bellman operator.

\paragraph{Data selection.}
We refer to this variant as \dadiff-select. We select fixed percentage data with the lowest deviation $d(s,a,s')$ from a batch of source domain data. The selected data is then used to update the value function. We formulate the objective function of the value function as follows,
\begin{equation}
    \begin{aligned}
    \mathcal{L}_\mathrm{critic} =\ & \mathbb{E}_{(s, a, r, s') \sim \mathcal{D}_\mathrm{tar}} \left[ (Q_\phi - \mathcal{T}Q_\phi)^2 \right] + \\
    & \mathbb{E}_{(s, a, r, s') \sim \mathcal{D}_\mathrm{src}} \left[ \omega(s,a,s') (Q_\phi - \mathcal{T}Q_\phi)^2 \right],
    \end{aligned}
    \label{eq:critic-objective-selection}
\end{equation}
where $\omega(s,a,s') = \mathds{1}(d(s,a,s') < d_{\xi\%})$, $\mathds{1}$ is the indicator function, and $d_{\xi\%}$ denotes the lowest $\xi$-quantile deviation in the batch.

For both variants, the objective function of the policy $\pi$ is formulated as:
\begin{equation}
    \resizebox{\linewidth}{!}{$
    \mathcal{L}_\mathrm{actor} = \mathbb{E}_{(s, a, r, s') \sim \mathcal{D}_\mathrm{src} \cup \mathcal{D}_\mathrm{tar}} \left[ -\min_{i=1,2} Q_{\phi_i}(s,a) + \tau \log \pi(a|s) \right],
    $}
    \label{eq:actor-objective}
\end{equation}
where $\tau$ is the entropy temperature coefficient, and $i$ denotes the value function index. 
We provide the pseudocode of \dadiff in Algorithm \ref{algo:dadiff}.

\begin{algorithm}
\footnotesize
\SetInd{0.25em}{1em}
\caption{Domain Adaptation with \dadiff}
\label{algo:dadiff}
\DontPrintSemicolon
\SetAlgoNlRelativeSize{-1}
\SetKwInput{KwInput}{Input}
\SetKwInput{KwInit}{Initialization}

\KwInput{Source domain $\mathcal{M}_\mathrm{src}$, target domain $\mathcal{M}_\mathrm{tar}$, and target domain interaction frequency $F$}
\KwInit{Policy $\pi$, value function $\{Q_{\phi_i}\}_{i=1,2}$, target value function $\{Q_{\phi'_i}\}_{i=1,2}$, noise model $\epsilon^\theta_\mathrm{tar}$, replay buffers $\{\mathcal{D}_\mathrm{src}, \mathcal{D}_\mathrm{tar}\}$, penalty coefficient $\lambda$, data selection ratio $\xi$, batch size $N$}

\For{$j = 1, 2, \dots$}{
    Collect $(s_\mathrm{src}, a_\mathrm{src}, r_\mathrm{src}, s'_\mathrm{src})$ from $\mathcal{M}_\mathrm{src}$, store in $\mathcal{D}_\mathrm{src}$\;
    \If{$j \bmod F = 0$}{
        Collect $(s_\mathrm{tar}, a_\mathrm{tar}, r_\mathrm{tar}, s'_\mathrm{tar})$ from $\mathcal{M}_\mathrm{tar}$, store in $\mathcal{D}_\mathrm{tar}$\;
    }
    Sample $N$ transitions from $\mathcal{D}_\mathrm{tar}$, train model $\epsilon^\theta_\mathrm{tar}$ via Eq.~\ref{eq:training-objective-diffusion-models-rewritten}\;
    Sample $N$ transitions from $\mathcal{D}_\mathrm{src}$, compute $d(s_\mathrm{src}, a_\mathrm{src}, s'_\mathrm{src})$ via Eq.~\ref{eq:estimated-generative-trajectory-deviation}\;
    \If{using reward modification}{
        Modify source domain rewards via Eq.~\ref{eq:reward-modification}\;
        Update value functions $Q_{\phi_i}$ by minimizing Eq.~\ref{eq:critic-objective-modification}\;
    }\ElseIf{using data selection}{
        Select $\xi$-quantile data from $\mathcal{D}_\mathrm{src}$ by $d(s_\mathrm{src}, a_\mathrm{src}, s'_\mathrm{src})$\;
        Update value functions $Q_{\phi_i}$ by minimizing Eq.~\ref{eq:critic-objective-selection}\;
    }
    Update actor $\pi$ by minimizing Eq.~\ref{eq:actor-objective}\;
    Update target value functions $Q_{\phi'_i}$\;
}
\end{algorithm}

\section{Experiments}
\label{sec:experiments}

\subsection{Experimental Setup}
\label{sec:experimental-setup}

We conduct experiments in four environments (\textit{ant}, \textit{hopper}, \textit{halfcheetah}, \textit{walker}) from Gym MuJoCo \cite{todorov2012mujoco, brockman2016openai}. The source domain is set as the original environment, while the target domain is set as the environment with shifts in kinematics, morphology, friction, or gravity. Kinematic shifts restrict joint rotation ranges, morphology shifts reduce limb sizes, friction shifts modify the friction coefficient, and gravity shifts adjust gravitational acceleration. Kinematic and morphology configurations follow PAR \cite{lyu2024crossdomain}, while friction and gravity shifts follow ODRL \cite{lyu2024odrl} at a level of 0.5.

We compare our method with the following baselines: \textbf{DARC} \cite{eysenbach2021offdynamics}, which trains domain classifiers to estimate the dynamics discrepancy and modifies the reward function in the source domain; \textbf{VGDF} \cite{xu2023crossdomain}, which uses a value-guided data filtering method to select data from the source domain; \textbf{PAR} \cite{lyu2024crossdomain}, which trains encoders to estimate the representation discrepancy and modifies the reward function in the source domain; \textbf{SAC-IW}, which estimates the dynamics discrepancy as an importance sampling term for value function; \textbf{SAC-tune}, which fine-tunes the policy in the target domain for $10^5$ environmental steps; \textbf{SAC-tar} \cite{haarnoja2018soft}, which is the vanilla SAC trained in the target domain with $10^5$ environmental steps; \textbf{Oracle} \cite{haarnoja2018soft}, which is the vanilla SAC trained in the target domain with 1M environmental steps. We implement all algorithms based on the official code of ODRL \cite{lyu2024odrl} and follow the hyperparameters in the original paper. We allow all algorithms to interact with the source domain for 1M environmental steps and the target domain for $10^5$ environmental steps, \textit{i.e.}, the target domain interaction frequency $F=10$. All algorithms are trained with five random seeds.

\subsection{Adaptation Performance Evaluation}
\label{sec:adaptation-performance}

\begin{figure*} %
    \centering
    \setlength{\belowcaptionskip}{-0.4cm}
        \includegraphics[width=1.0\linewidth]{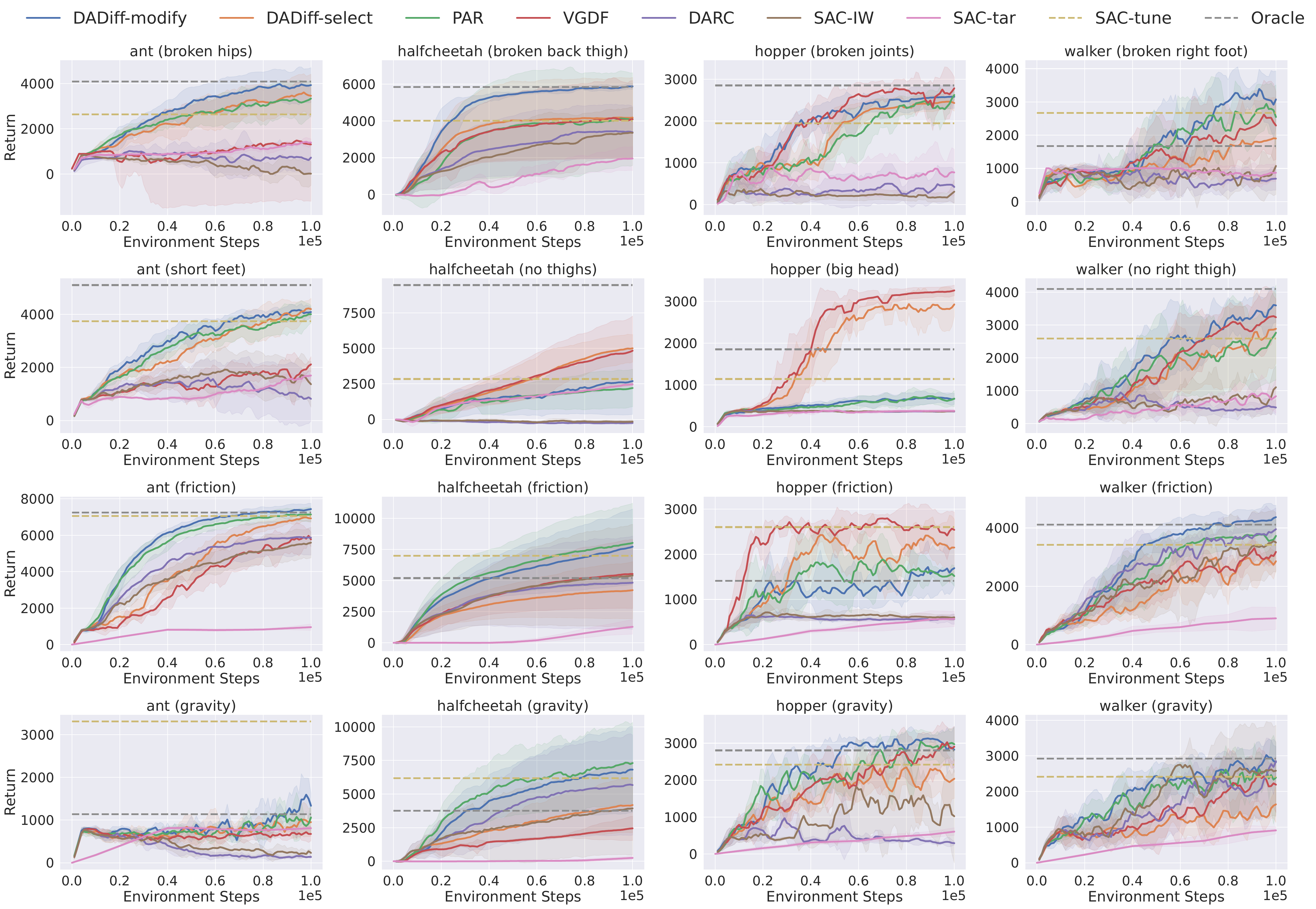}
        \caption{
            Adaptation performance under kinematic, morphology, friction, and gravity shifts (from top to bottom). The solid curves and the shaded regions denote the mean and standard deviation over five random seeds, respectively. DADiff demonstrates superior or highly competitive performance against all baselines in the majority of tasks.
        }
        \label{fig:overall-performance}
\end{figure*}

\begin{figure}
    \centering
    \setlength{\belowcaptionskip}{-0.4cm}
        \includegraphics[width=1.0\linewidth]{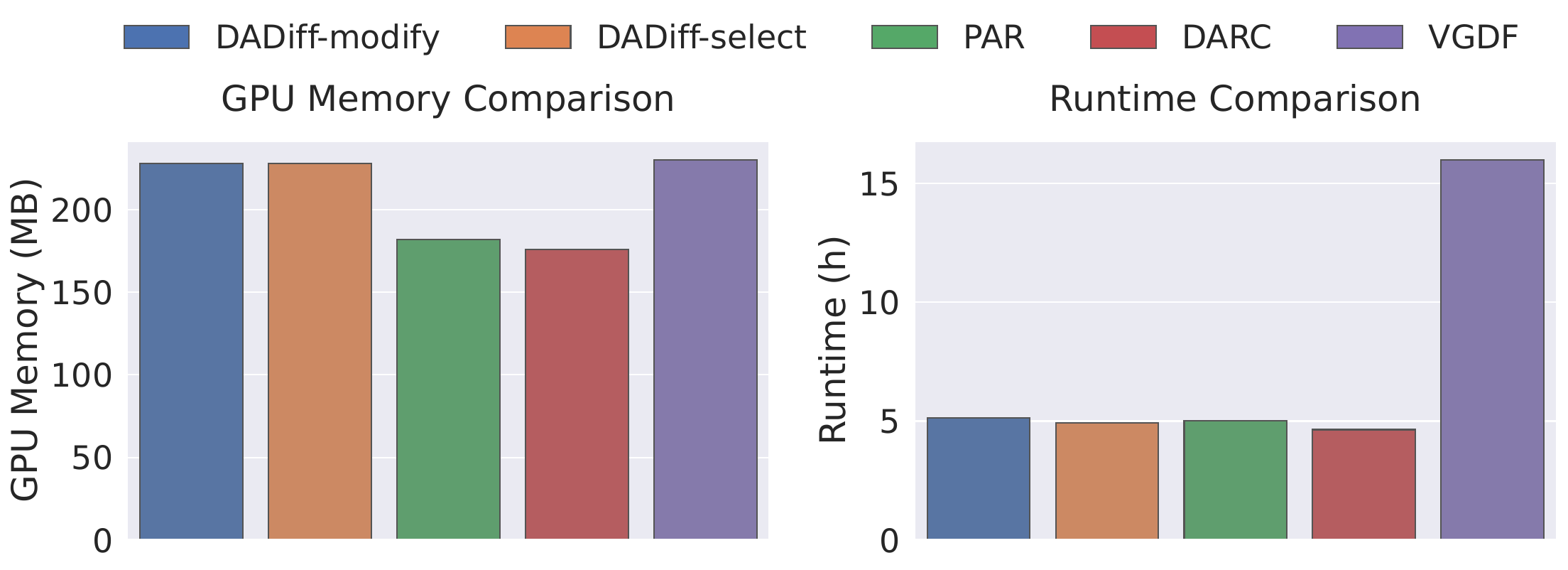}
        \caption{
            GPU memory and runtime comparisons on the \textit{halfcheetah (broken back thigh)} task. In the GPU memory comparison, \dadiff-modify and \dadiff-select exhibit slightly higher GPU memory cost compared to PAR and DARC. In the runtime comparison, VGDF requires \(3\times\) more training time than other methods due to its model-based approach.
        }
        \label{fig:runtime-comparison}
\end{figure}

We conduct experiments on sixteen tasks with diverse shifts to evaluate the adaptation performance of \dadiff and baselines. The results are summarized in Figure~\ref{fig:overall-performance}. Overall, \dadiff exhibits consistently strong performance, demonstrating superior or competitive performance against all baselines in the majority of tasks. While some existing methods, such as VGDF, PAR, or SAC-tune, occasionally reach competitive results in specific tasks, their performance fluctuates significantly across different tasks. In contrast, \dadiff maintains stable and superior adaptation performance across a wide range of shift types. We further discuss the performance of two variants of \dadiff, \dadiff-modify and \dadiff-select, respectively.

\paragraph{Reward modification variant.}

The reward modification variant of our method, \dadiff-modify, demonstrates strong and consistent performance across diverse tasks. As shown in Figure~\ref{fig:overall-performance}, it surpasses other reward modification baselines, including PAR, DARC, and SAC-IW, in most tasks and achieves performance comparable to oracle-level methods. On average, \dadiff-modify improves by 8.7\% across all sixteen tasks, with the largest gain of 42.3\% on the \textit{halfcheetah (broken back thigh)}. 
In addition to its performance advantages, we observe that our method incurs a slight increase in GPU memory usage compared to PAR and DARC due to latent state generation, as shown in Figure~\ref{fig:runtime-comparison}. This modest increase, however, contributes positively to adaptation performance by enabling better discrepancy estimation, thus representing a favorable trade-off between computational cost and effectiveness.
To further explore the performance of \dadiff-modify in stochastic environments, we provide an experiment in Section~\ref{sec:connection-dadiff-par}.

\paragraph{Data selection variant.}
In Figure~\ref{fig:overall-performance}, the data selection variant, \dadiff-select, proves to be a highly effective alternative by achieving competitive performance against top baselines in tasks where reward modification methods falter. Specifically, in the \textit{halfcheetah (no thighs)}, \textit{hopper (big head)}, and \textit{hopper (friction)} tasks, reward modification methods exhibit poor performance. In contrast, \dadiff-select achieves results that are highly competitive with the top-performing baseline, VGDF. This indicates that in certain tasks, directly filtering for transitions with low dynamics mismatch is a more effective strategy than modifying rewards. We analyze the possible reason in Section \ref{sec:reward-distribution-analysis}.
Furthermore, while VGDF demonstrates top-tier performance in these tasks, it carries significant trade-offs. Since VGDF is a model-based approach, it takes significantly longer to train by more than \(3\times\), as shown in Figure~\ref{fig:runtime-comparison}. On the other hand, \dadiff-select is able to match or exceed the performance of VGDF on such environments while maintaining comparable efficiency to similar model-free baselines.

\subsection{Parameter Study}
\label{sec:parameter-study}

The performance of \dadiff is influenced by several key hyperparameters. To better understand their roles, we conducted a series of experiments across different tasks. The results on \textit{halfcheetah (broken back thigh)} and \textit{walker (no right thigh)} are presented in Figure~\ref{fig:hyperparam-study}.

\paragraph{Penalty Coefficient $\lambda$.} $\lambda$ controls the scale of reward penalty in \dadiff-modify. As shown in Figure~\ref{fig:penalty-coefficient}, we evaluate the performance of \dadiff-modify across multiple values of $\lambda$. We find that a worse performance is often shown in the setting $\lambda=0$, where no penalty is adopted for rewards. It demonstrates the necessity of reward modification. Meanwhile, the results also indicate that the optimal value of $\lambda$ is task-dependent, and there could be multiple values that yield good performance for a specific task. For instance, in the \textit{halfcheetah (broken back thigh)} task, both $\lambda=0.5$ and $\lambda=5.0$ achieve the best performance. A poorly chosen $\lambda$ can significantly degrade performance, highlighting the importance of tuning this coefficient.

\paragraph{Data Selection Ratio $\xi\%$.} $\xi\%$ controls the ratio of source domain data to retain in \dadiff-select. As shown in Figure~\ref{fig:data-ratio}, we evaluate the performance of \dadiff-select across multiple values of $\xi\%$. Similar to the penalty coefficient, the optimal value of $\xi\%$ is task-dependent. We find that both too much ($\xi\%=100\%$) and too little (($\xi\%=0\%$)) source data can lead to suboptimal performance. As retaining too much source data may introduce transitions with significant dynamics mismatch, while retaining too little may result in insufficient data for effective learning.

\paragraph{Diffusion Timesteps $K$.} $K$ controls the number of diffusion timesteps used to measure the discrepancy in both \dadiff-modify and \dadiff-select. We provide the results of \dadiff-modify in Figure~\ref{fig:diffusion-steps-modify}. The results shows that performance improves up to $K=100$. Increasing $K$ further to 200 causes a decline, likely due to the limited capacity of the noise model, which may struggle to accurately estimate noise across too many timesteps. 

\begin{figure}
    \centering
    \setlength{\belowcaptionskip}{-0.4cm}
        \subfloat[Penalty coefficient $\lambda$.]
        {
        \includegraphics[width=1.0\linewidth]{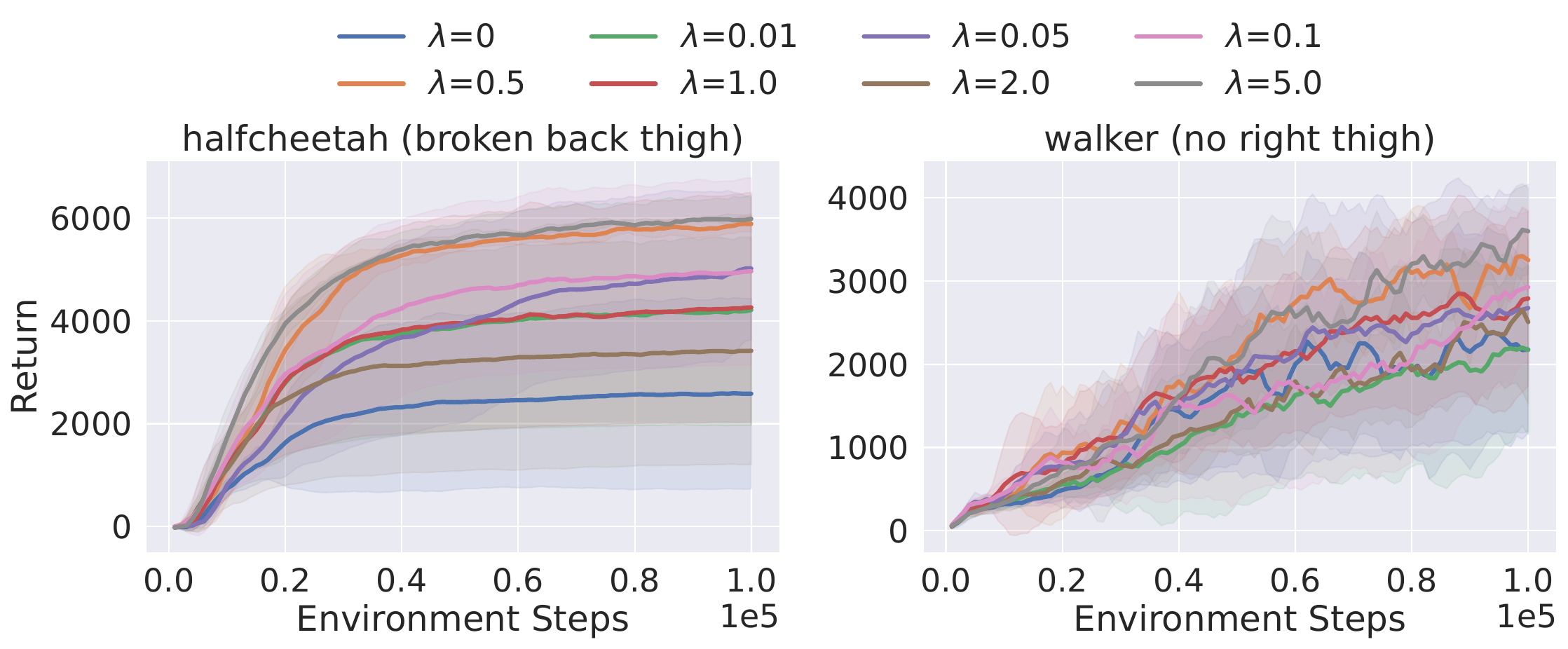}
        \label{fig:penalty-coefficient}
        }
        \\
        \subfloat[Data ratio $\xi\%$.]
        {
        \includegraphics[width=1.0\linewidth]{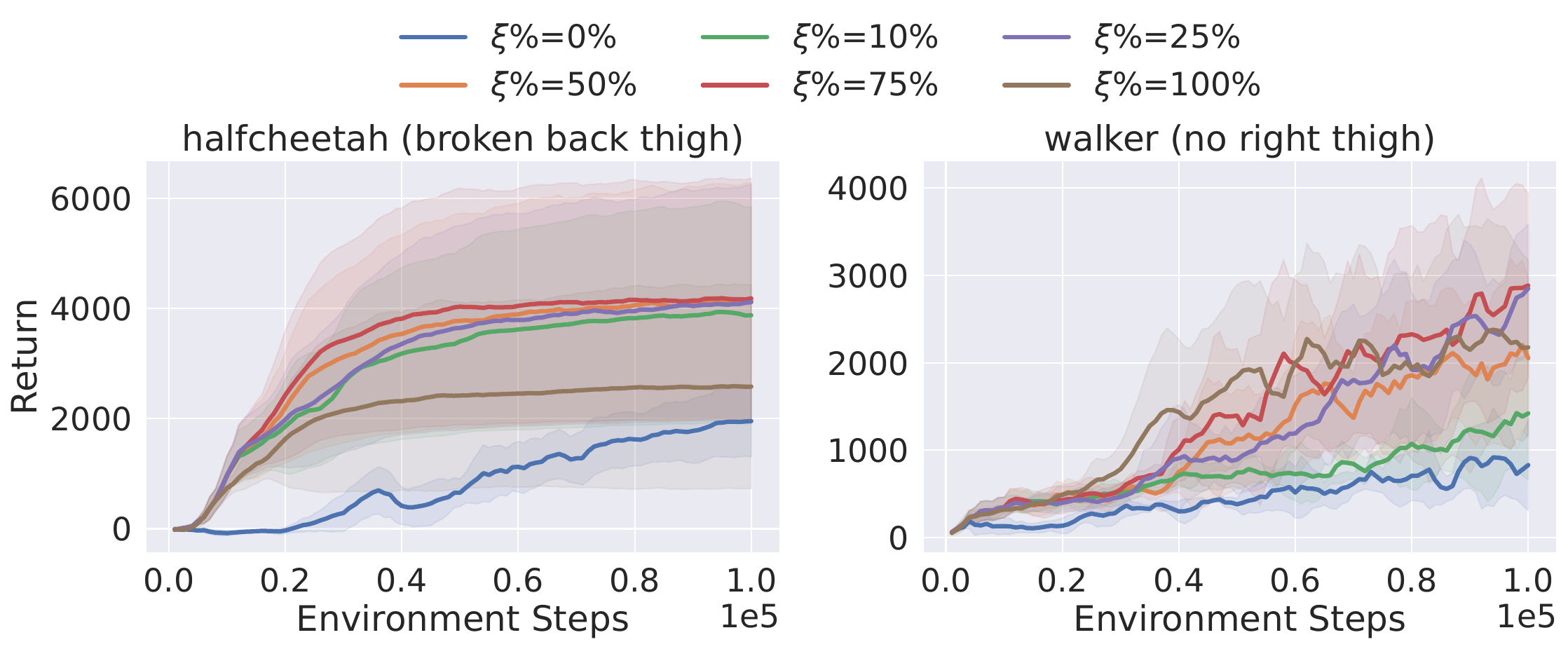}
          \label{fig:data-ratio}
        }
        \\
        \subfloat[Diffusion timesteps $K$.]
        {
        \includegraphics[width=1.0\linewidth]{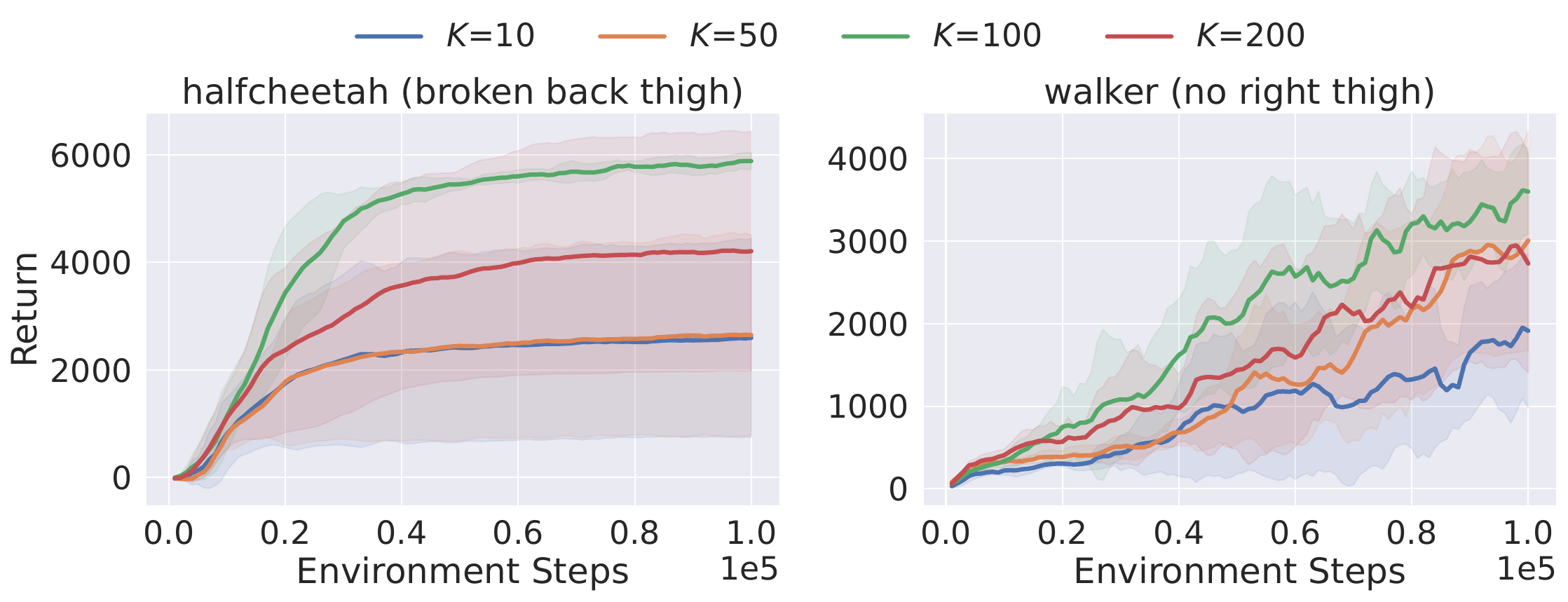}
          \label{fig:diffusion-steps-modify}
        }
    \caption{
        Parameter study. The solid curves and the shaded regions denote the mean and standard deviation over five random seeds, respectively.
    }
    \label{fig:hyperparam-study}
\end{figure}

\section{Discussions}
\label{sec:discussion}

\subsection{Connection between \dadiff and PAR}
\label{sec:connection-dadiff-par}

We explore the connection between PAR and our method from a theoretical perspective. The performance bound of our method is controlled by the generative trajectory discrepancy in Theorem~\ref{theorem:performance-bound-latent-state-discrepancy}. We consider a special case, where the number of latent states in the trajectory is $K=1$.
Instead of considering latent states in the generative trajectory, we take $s'_1$ as a latent representation and introduce the one-to-one representation mapping assumption in PAR \cite{lyu2024crossdomain}, which assumes that there exists a one-to-one mapping for each state-action pair $(s, a)$ and its latent representation $s'_1$. In this setting, the state-action pair $(s, a)$ in Equation~\ref{eq:performance-bound-latent-state-discrepancy} can be all replaced by the corresponding latent representation $s'_1$.
Therefore, the performance bound can be rewritten as follows,
\begin{equation}
    \resizebox{\linewidth}{!}{$
    \begin{aligned}
    \eta_{\mathcal{M}_\mathrm{src}}(\pi) & - \eta_{\mathcal{M}_\mathrm{tar}}(\pi) \le \\
    & \frac{\sqrt{2}\gamma r_{\mathrm{max}}}{(1 - \gamma)^2} \mathbb{E}_{\rho_\mathrm{src}^\pi} \left[ \sqrt { \mathbb{E}_{P_\mathrm{src}} \left[ D_\mathrm{KL}(P_\mathrm{src}(s'_0 | s'_1) || P_\mathrm{tar}(s'_0 | s'_1)) \right] } \right].
    \end{aligned}
    $}
\end{equation}
We further introduce a conclusion proven in PAR \cite{lyu2024crossdomain}: %
\begin{equation}
    \resizebox{\linewidth}{!}{$
    \begin{aligned}
    D_\mathrm{KL}(P_\mathrm{src}(s'_1 | s'_0) || & P_\mathrm{tar}(s'_1 | s'_0)) = \\
    & D_\mathrm{KL}(P_\mathrm{src}(s'_0 | s'_1) || P_\mathrm{tar}(s'_0 | s'_1)) + \mathbb{H}(s'_\mathrm{src}) - \mathbb{H}(s'_\mathrm{tar}).
    \end{aligned}
    $}
\end{equation}
Therefore, the performance bound can be rewritten as follows,
\begin{equation}
    \resizebox{\linewidth}{!}{$
    \begin{aligned}
    \eta_{\mathcal{M}_\mathrm{src}}(\pi) - & \eta_{\mathcal{M}_\mathrm{tar}}(\pi) \le \\
    & \frac{\sqrt{2}\gamma r_{\mathrm{max}}}{(1 - \gamma)^2} \mathbb{E}_{\rho_\mathrm{src}^\pi} \left[ \sqrt { \mathbb{E}_{P_\mathrm{src}} \left[ D_\mathrm{KL}(P_\mathrm{src}(s'_1 | s'_0) || P_\mathrm{tar}(s'_1 | s'_0)) \right] } \right] +\\
    & \frac{\sqrt{2}\gamma r_{\mathrm{max}}}{(1 - \gamma)^2} \mathbb{E}_{\rho_\mathrm{src}^\pi} \left[ \sqrt { \mathbb{E}_{P_\mathrm{src}} \left[ \mathbb{H}(s'_\mathrm{src}) - \mathbb{H}(s'_\mathrm{tar}) \right] } \right].
    \end{aligned}
    $}
\end{equation}
This performance bound is consistent with the performance bound of PAR, which indicates that PAR can be considered as a special case of our method. However, the one-to-one representation mapping assumption may not hold in practice, especially in stochastic environments, which limits the application of PAR. In contrast, our method does not rely on this assumption and can handle more general scenarios. We validate this point in environments with stochastic dynamics. Noises with different standard deviation $\varsigma$ are introduced to the actions to simulate stochastic dynamics, and two tasks with kinematic shifts, \textit{hopper (broken joints)} and \textit{walker (broken right foot)}, are considered. We evaluate the performance of \dadiff-modify and PAR, which is presented in Table~\ref{tab:stochastic-performance}. Notably, our method maintains robust performance even as the standard deviation $\varsigma$ increases, while PAR's performance degrades significantly. We believe the decrease in PAR's performance is due to its reliance on one-to-one representation assumptions, which may not hold in stochastic settings.

\begin{table}
\centering
\vspace{0.15cm}
\caption{
Adaptation performance under stochastic dynamics controlled by the standard deviation parameter \(\varsigma\). 
Average return and standard deviation over five random seeds are reported. 
The best results are in \textbf{bold}, and performance change relative to the deterministic setting (\(\varsigma = 0.0\)) is shown in parentheses.
}
\label{tab:stochastic-performance}
\resizebox{\linewidth}{!}{
\begin{tabular}{cccc}
\toprule
Environment & $\varsigma$ & \textbf{\dadiff-modify} & \textbf{PAR} \\
\midrule

\multirow{4}{*}{hopper (broken joints)} 
 & 0.00 & 2582.1$\pm$251.6 & \textbf{2623.1$\pm$105.2} \\
 & 0.01 & \textbf{2591.0$\pm$159.2} ($\uparrow$0.34\%) & 2398.3$\pm$297.8 ($\downarrow$8.57\%) \\
 & 0.02 & \textbf{2515.9$\pm$101.8} ($\downarrow$2.57\%) & 2328.7$\pm$302.9 ($\downarrow$11.22\%) \\
 & 0.03 & \textbf{2574.2$\pm$280.6} ($\downarrow$0.31\%) & 2406.1$\pm$455.7 ($\downarrow$8.27\%) \\

\midrule

\multirow{4}{*}{walker (broken right foot)} 
 & 0.00 & \textbf{3390.4$\pm$464.4} & 2943.3$\pm$546.7 \\
 & 0.01 & \textbf{2879.3$\pm$688.9} ($\downarrow$15.08\%) & 2373.8$\pm$1072.4 ($\downarrow$19.35\%) \\
 & 0.02 & 2812.5$\pm$934.6 ($\downarrow$17.05\%) & \textbf{2825.8$\pm$466.6} ($\downarrow$3.99\%) \\
 & 0.03 & \textbf{3176.8$\pm$796.4} ($\downarrow$6.30\%) & 1613.9$\pm$878.7 ($\downarrow$45.17\%) \\

\bottomrule
\end{tabular}
}
\end{table}

\subsection{Reward Distribution Analysis}
\label{sec:reward-distribution-analysis}

\begin{figure}
    \centering
    \setlength{\belowcaptionskip}{-0.5cm}
        \includegraphics[width=1.0\linewidth]{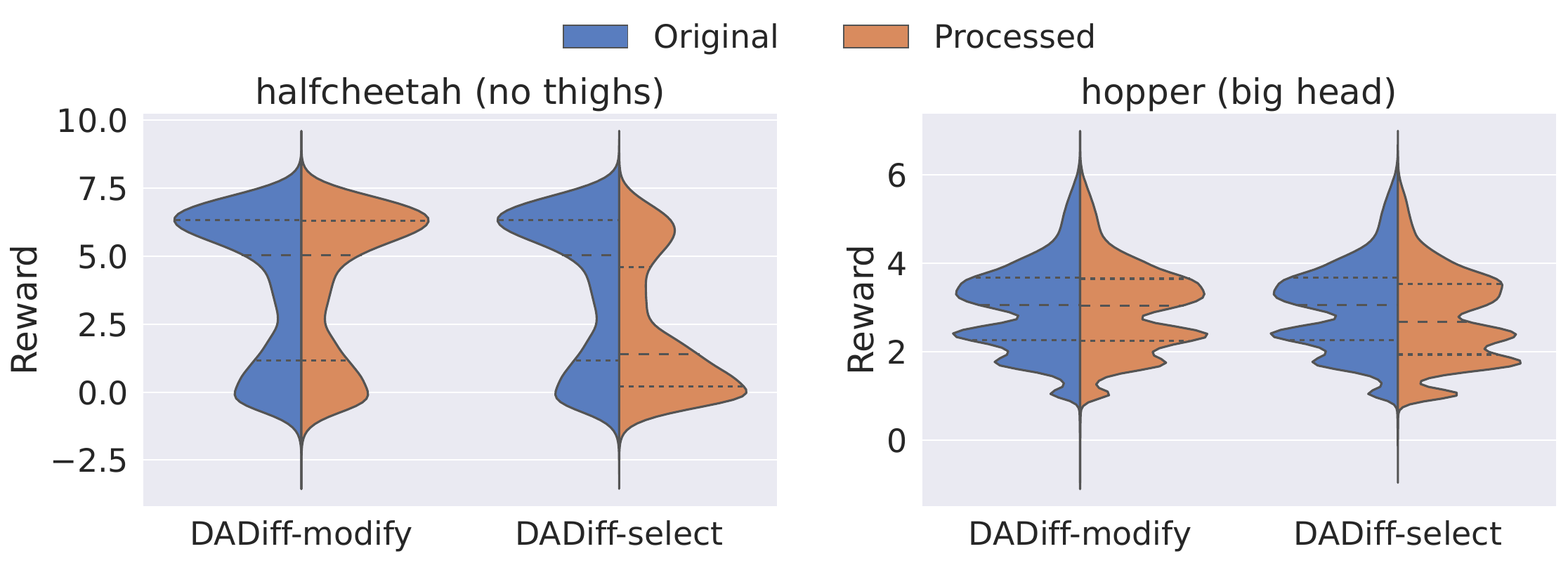}
        \caption{
            Reward distribution comparison between the source-domain rewards before processing (Original) and after modification or selection (Processed).
        }
        \label{fig:reward-distribution}
\end{figure}

We further examine the reasons behind the superior performance of \dadiff-select, in contrast to the severe failure of \dadiff-modify on \textit{halfcheetah (no thighs)} and \textit{hopper (big head)} tasks, as illustrated in Figure~\ref{fig:overall-performance}. Specifically, we analyze the reward distributions of source-domain data after modification or selection. The results are presented in Figure~\ref{fig:reward-distribution}. We find that \dadiff-select generates a higher distribution in the low-reward region compared to \dadiff-modify on both tasks. This suggests that the low-reward data may play a crucial role in these tasks, which can effectively guide the policy to avoid undesirable states and actions.

\section{Conclusion}
\label{sec:conclusion}

This work explores the problem of online dynamics adaptation in reinforcement learning from a generative modeling perspective. We first theoretically analyze the performance bound of a policy in the source and target domains, which is controlled by the generative trajectory discrepancy. Based on this analysis, we propose a novel method, \dadiff, which utilizes diffusion models to measure the dynamics discrepancy and performs either reward modification or data selection to adapt to the target domain. Extensive experiments demonstrate that our method outperforms existing baselines in tasks with various shifts. %

\section*{APPENDIX}

\subsection{Useful Lemmas}
\label{sec:useful-lemmas}

\begin{lemma}{
    (Telescoping lemma.)
    Denote $\mathcal{M}_1 = (\mathcal{S}, \mathcal{A}, P_1, r, \gamma)$ and $\mathcal{M}_2 = (\mathcal{S}, \mathcal{A}, P_2, r, \gamma)$ as two MDPs with the same state and action spaces but different transition dynamics $P_1$ and $P_2$. The performance difference of a policy $\pi$ evaluated in $\mathcal{M}_1$ and $\mathcal{M}_2$ can be expressed as:
    \begin{equation*}
        \resizebox{\linewidth}{!}{$
        \eta_{\mathcal{M}_1}(\pi) - \eta_{\mathcal{M}_2}(\pi) = \frac{\gamma}{1-\gamma} \mathbb{E}_{\rho^\pi_{\mathcal{M}_1}(s,a)} \left[\mathbb{E}_{s' \sim P_1}[V_{\mathcal{M}_2}^\pi(s')] - \mathbb{E}_{s' \sim P_2}[V_{\mathcal{M}_2}^\pi(s')]\right]
        $}
        \label{eq:telescoping}
    \end{equation*}

    \label{lemma:telescoping}
}
\end{lemma}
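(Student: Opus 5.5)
We prove the telescoping lemma by the standard simulation-lemma argument, comparing the value functions of the two MDPs along trajectories generated in $\mathcal{M}_1$. Write $V^\pi_{\mathcal{M}}(s)=\mathbb{E}_{a\sim\pi}[Q^\pi_{\mathcal{M}}(s,a)]$ for the value function. Here $Q^\pi_{\mathcal{M}}(s,a)=r(s,a)+\gamma\,\mathbb{E}_{s'\sim P}[V^\pi_{\mathcal{M}}(s')]$ is the (unnormalized) action-value function. Since $\rho^\pi_{\mathcal{M}}$ is normalized by $(1-\gamma)$, we have $\eta_{\mathcal{M}}(\pi)=(1-\gamma)\,\mathbb{E}_{s_0\sim\mu_0}[V^\pi_{\mathcal{M}}(s_0)]$, with a common initial distribution $\mu_0$. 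The whole proof tracks this normalization consistently, which is where the prefactor $\frac{\gamma}{1-\gamma}$ will come from once the $(1-\gamma)$ is absorbed into $\rho$.

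\textbf{Step 1 (one-step decomposition).} For any state $s$,
\[
V^\pi_{\mathcal{M}_1}(s)-V^\pi_{\mathcal{M}_2}(s)=\gamma\,\mathbb{E}_{a\sim\pi}\Big[\mathbb{E}_{s'\sim P_1}[V^\pi_{\mathcal{M}_1}(s')]-\mathbb{E}_{s'\sim P_2}[V^\pi_{\mathcal{M}_2}(s')]\Big],
\]
because the reward terms cancel. Adding and subtracting $\gamma\,\mathbb{E}_{s'\sim P_1}[V^\pi_{\mathcal{M}_2}(s')]$ splits this into two pieces:
\begin{itemize}
\item a \emph{local mismatch} term $\gamma\,\mathbb{E}_{a\sim\pi}\big[\mathbb{E}_{P_1}[V^\pi_{\mathcal{M}_2}]-\mathbb{E}_{P_2}[V^\pi_{\mathcal{M}_2}]\big]$;
\item a \emph{propagation} term $\gamma\,\mathbb{E}_{a\sim\pi,\,s'\sim P_1}\big[V^\pi_{\mathcal{M}_1}(s')-V^\pi_{\mathcal{M}_2}(s')\big]$, which has the same form as the left-hand side evaluated one step later under $\mathcal{M}_1$.
\end{itemize}

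\textbf{Step 2 (unroll).} Apply Step 1 recursively along a trajectory $s_0,a_0,s_1,\dots$ drawn from $\pi$ in $\mathcal{M}_1$. After $T$ steps, the difference equals $\sum_{t=0}^{T-1}\gamma^{t}$ times the expected local mismatch at time $t$, plus $\gamma^T$ times the expected value gap at time $T$. The remainder is bounded by $\gamma^T\cdot 2r_{\max}/(1-\gamma)\to 0$, using the reward bound and $\gamma<1$; the degenerate case $\gamma=1$ is excluded implicitly by the $(1-\gamma)$ normalization. Taking $T\to\infty$ yields
\[
\mathbb{E}_{\mu_0}[V^\pi_{\mathcal{M}_1}-V^\pi_{\mathcal{M}_2}]=\gamma\sum_{t\ge0}\gamma^t\,\mathbb{E}_{s\sim P^\pi_{\mathcal{M}_1,t},\,a\sim\pi}\big[\mathbb{E}_{P_1}[V^\pi_{\mathcal{M}_2}(s')]-\mathbb{E}_{P_2}[V^\pi_{\mathcal{M}_2}(s')]\big].
\]

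\textbf{Step 3 (rewrite via occupancy).} By the definition of $\rho^\pi_{\mathcal{M}_1}$, the weighted sum $\sum_t\gamma^t P^\pi_{\mathcal{M}_1,t}(s)\pi(a|s)$ equals $\rho^\pi_{\mathcal{M}_1}(s,a)/(1-\gamma)$. Multiplying by the $(1-\gamma)$ that relates $\eta$ to $V$ then gives exactly the stated identity with prefactor $\frac{\gamma}{1-\gamma}$.

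The main obstacle is bookkeeping rather than ideas. First, the normalization conventions must be applied consistently, in particular that $\eta$ is defined through the normalized $\rho$ so that $V$ must be scaled to match, since otherwise the constant comes out off by a factor of $(1-\gamma)$. Second, the interchange of the infinite sum and expectation must be justified, which follows from boundedness and dominated convergence.
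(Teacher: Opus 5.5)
\textbf{There is a genuine gap, and it is in Step 3.} Your Steps 1 and 2 are the standard simulation (telescoping) argument. This is the same argument behind Lemma 4.3 of SLBO, which is all the paper gives as proof, and both steps are correct. The failure is in Step 3, which is exactly the bookkeeping you single out.

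With your conventions, $V^\pi_{\mathcal{M}}$ is unnormalized via $Q=r+\gamma\,\mathbb{E}_P[V]$, and $\eta_{\mathcal{M}}(\pi)=(1-\gamma)\,\mathbb{E}_{\mu_0}[V^\pi_{\mathcal{M}}]$. Set $G(s,a)=\mathbb{E}_{s'\sim P_1}[V^\pi_{\mathcal{M}_2}(s')]-\mathbb{E}_{s'\sim P_2}[V^\pi_{\mathcal{M}_2}(s')]$. Step 2, together with $\sum_t\gamma^tP^\pi_{\mathcal{M}_1,t}\pi=\rho^\pi_{\mathcal{M}_1}/(1-\gamma)$, gives $\mathbb{E}_{\mu_0}[V^\pi_{\mathcal{M}_1}-V^\pi_{\mathcal{M}_2}]=\frac{\gamma}{1-\gamma}\,\mathbb{E}_{\rho^\pi_{\mathcal{M}_1}}[G]$. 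Multiplying by $(1-\gamma)$ then yields
\[
\eta_{\mathcal{M}_1}(\pi)-\eta_{\mathcal{M}_2}(\pi)=\gamma\,\mathbb{E}_{\rho^\pi_{\mathcal{M}_1}}[G],
\]
not $\frac{\gamma}{1-\gamma}\,\mathbb{E}_{\rho^\pi_{\mathcal{M}_1}}[G]$. The $(1-\gamma)$ relating $\eta$ to $V$ cancels the $1/(1-\gamma)$ produced when the discounted sum is rewritten through $\rho^\pi_{\mathcal{M}_1}$.

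This is not cosmetic. Take one action and two states with $r(A)=0$, $r(B)=1$, starting in $A$. Let $P_1$ always move to $B$ and $P_2$ always move to $A$. Then $V^\pi_{\mathcal{M}_2}(A)=0$ and $V^\pi_{\mathcal{M}_2}(B)=1$, so $G\equiv 1$. Meanwhile $\eta_{\mathcal{M}_1}(\pi)=\gamma$ and $\eta_{\mathcal{M}_2}(\pi)=0$, so the difference is $\gamma\neq\frac{\gamma}{1-\gamma}$. Under the conventions you fixed, your argument carried out correctly refutes the displayed identity rather than proving it.

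\textbf{How to repair it.} The identity holds only if $V$ carries the same $(1-\gamma)$ normalization as $\eta$. Define $V^\pi_{\mathcal{M}}(s)=(1-\gamma)\,\mathbb{E}\bigl[\sum_t\gamma^t r(s_t,a_t)\mid s_0=s\bigr]$. Then $V^\pi_{\mathcal{M}}(s)=\mathbb{E}_{a\sim\pi}\bigl[(1-\gamma)r(s,a)+\gamma\,\mathbb{E}_{s'\sim P}[V^\pi_{\mathcal{M}}(s')]\bigr]$ and $\eta_{\mathcal{M}}(\pi)=\mathbb{E}_{\mu_0}[V^\pi_{\mathcal{M}}]$ with no extra factor. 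Your Steps 1--2 go through verbatim, since the reward terms still cancel, and Step 3 then produces exactly $\frac{\gamma}{1-\gamma}$.

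Alternatively, keep the unnormalized $V$ and state the result for the unnormalized returns $\mathbb{E}_{\mu_0}[V^\pi_{\mathcal{M}_1}-V^\pi_{\mathcal{M}_2}]$, which is SLBO's original form. Either way, make the choice explicit. The paper never defines $V$, and its later use of $|V^\pi_{\mathcal{M}}|\le r_{\mathrm{max}}/(1-\gamma)$ points to the unnormalized convention. Under that convention, the lemma as written is off by a factor $1/(1-\gamma)$.
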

\vspace{-0.1cm}
\textbf{\textit{Proof.}} Please see Lemma 4.3 in SLBO \cite{luo2018algorithmic} for a detailed proof.

\subsection{Proof of Theorem~\ref{theorem:performance-bound-latent-state-discrepancy}}
\label{sec:proof-theorem}

\begin{theorem}{
    (Performance bound controlled by generative trajectory discrepancy.)
    Denote $\mathcal{M}_\mathrm{src}$ and $\mathcal{M}_\mathrm{tar}$ as the source and target domains with different dynamics, respectively. The performance difference of any policy $\pi$ evaluated in $\mathcal{M}_\mathrm{src}$ and $\mathcal{M}_\mathrm{tar}$ can be bounded as below,
    \begin{equation*}
        \resizebox{\linewidth}{!}{$
            \begin{aligned}
            & \eta_{\mathcal{M}_\mathrm{src}}(\pi) - \eta_{\mathcal{M}_\mathrm{tar}}(\pi)
            \le \\
            & \frac{\sqrt{2}\gamma r_{\mathrm{max}}}{(1 - \gamma)^2}
            \underbrace{\mathbb{E}_{\rho_\mathrm{src}^\pi}\!\left[
            \sqrt{\mathbb{E}_{P_\mathrm{src}}\!\left[
            D_\mathrm{KL}\!\left(P_\mathrm{src}(s'_K \mid s, a)\,\|\,P_\mathrm{tar}(s'_K \mid s, a)\right)
            \right]}
            \right]}_{(a):\ \mathrm{initial\ latent\ state\ deviation}} + \\
            & \frac{\sqrt{2}\gamma r_{\mathrm{max}}}{(1 - \gamma)^2}
            \underbrace{\mathbb{E}_{\rho_\mathrm{src}^\pi}\!\left[
            \sqrt{\mathbb{E}_{P_\mathrm{src}}\!\left[
            \sum_{k=1}^{K}
            D_\mathrm{KL}\!\left(P_\mathrm{src}(s'_{k-1} \mid s'_k, s, a)\,\|\,P_\mathrm{tar}(s'_{k-1} \mid s'_k, s, a)\right)
            \right]}
            \right]}_{(b):\ \mathrm{latent\ state\ transition\ mismatch}}.
            \end{aligned}
        $}
    \end{equation*}
}
\end{theorem}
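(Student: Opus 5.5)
We apply Lemma~\ref{lemma:telescoping} with $\mathcal{M}_1=\mathcal{M}_\mathrm{src}$ and $\mathcal{M}_2=\mathcal{M}_\mathrm{tar}$. This writes the performance gap as $\frac{\gamma}{1-\gamma}\mathbb{E}_{\rho^\pi_\mathrm{src}}\bigl[\mathbb{E}_{P_\mathrm{src}}[V^\pi_\mathrm{tar}(s')]-\mathbb{E}_{P_\mathrm{tar}}[V^\pi_\mathrm{tar}(s')]\bigr]$. Bounded rewards give $|V^\pi_\mathrm{tar}|\le r_\mathrm{max}/(1-\gamma)$, so the inner difference is at most $\frac{2r_\mathrm{max}}{1-\gamma}\,D_\mathrm{TV}\bigl(P_\mathrm{src}(\cdot|s,a),P_\mathrm{tar}(\cdot|s,a)\bigr)$, with $D_\mathrm{TV}=\tfrac12\|\cdot\|_1$. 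We then apply Pinsker's inequality, $D_\mathrm{TV}\le\sqrt{\tfrac12 D_\mathrm{KL}}$. Combining the constants gives $\frac{\gamma}{1-\gamma}\cdot\frac{2r_\mathrm{max}}{1-\gamma}\cdot\frac{1}{\sqrt2}=\frac{\sqrt2\gamma r_\mathrm{max}}{(1-\gamma)^2}$, which is the prefactor in the statement.

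Next we lift the divergence from the marginal over $s'_0$ to the joint over the whole generative trajectory $s'_{0:K}$. By Definition~\ref{definition:bridge-states}, $P_\mathcal{M}(s'_0|s,a)$ is the $s'_0$-marginal of the joint $P_\mathcal{M}(s'_K|s,a)\prod_{k=1}^K P_\mathcal{M}(s'_{k-1}|s'_k,s,a)$. The data processing inequality (monotonicity of KL under marginalization) gives $D_\mathrm{KL}(P_\mathrm{src}(s'_0|s,a)\|P_\mathrm{tar}(s'_0|s,a))\le D_\mathrm{KL}(P_\mathrm{src}(s'_{0:K}|s,a)\|P_\mathrm{tar}(s'_{0:K}|s,a))$. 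The chain rule for KL over this Markov factorization splits the joint divergence into two parts. The first is the initial term $D_\mathrm{KL}(P_\mathrm{src}(s'_K|s,a)\|P_\mathrm{tar}(s'_K|s,a))$. The second is $\sum_{k=1}^K\mathbb{E}_{s'_k\sim P_\mathrm{src}}\bigl[D_\mathrm{KL}(P_\mathrm{src}(s'_{k-1}|s'_k,s,a)\|P_\mathrm{tar}(s'_{k-1}|s'_k,s,a))\bigr]$, where the expectation is over the source trajectory; this is what $\mathbb{E}_{P_\mathrm{src}}$ denotes in the statement. Since the first term has no $s'$-dependence, we may also write it inside $\mathbb{E}_{P_\mathrm{src}}$ as the statement does.

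Finally, we use $\sqrt{x+y}\le\sqrt{x}+\sqrt{y}$ to split the square root into terms (a) and (b), and pull the outer expectation over $\rho^\pi_\mathrm{src}$ through the sum by linearity. I expect no deep obstacle. The step needing the most care is the lifting: we must justify the data processing inequality and the chain rule for conditional densities on continuous latent spaces, and check that the expectation in each conditional KL is taken under the source chain, so that the nested $\mathbb{E}_{P_\mathrm{src}}$ inside the square root matches the statement. I would present the lifting carefully as a short lemma, stated for densities, and treat the remaining steps as routine.
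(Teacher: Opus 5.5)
Your proof is correct and follows essentially the same route as the paper: the telescoping lemma, the bound $|V^\pi_\mathrm{tar}|\le r_\mathrm{max}/(1-\gamma)$, Pinsker, lifting to the joint over $s'_{0:K}$, the Markov chain rule for KL, and $\sqrt{x+y}\le\sqrt{x}+\sqrt{y}$. The only difference is the order of two steps. The paper lifts the $L^1$/TV distance from $s'_0$ to $s'_{0:K}$ and then applies Pinsker to the joint, whereas you apply Pinsker to the marginal and lift via the KL data-processing inequality. Both give the same constant, and your version states the lifting explicitly as an inequality, where the paper writes it as an equality and drops the absolute value on $P_\mathrm{src}-P_\mathrm{tar}$.
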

\vspace{0.2cm}

\textbf{\textit{Proof.}} As the value function $V^\pi_\mathcal{M}(s)$ estimates the expected return of a policy $\pi$ starting from state $s$ in domain $\mathcal{M}$, and the rewards are bounded, we have $|V^\pi_\mathcal{M}(s)| \le r_{\mathrm{max}}/(1 - \gamma), \forall s$. By using Lemma~\ref{lemma:telescoping}, we have:
\vspace{-0.4cm}
\begin{center}
\resizebox{\columnwidth}{!}{%
\begin{minipage}{1.02\columnwidth}
{\footnotesize
\begin{subequations}
\begin{align}
    \eta_{\mathcal{M}_\mathrm{src}}&(\pi) - \eta_{\mathcal{M}_\mathrm{tar}}(\pi)
    = \frac{\gamma}{1-\gamma} \mathbb{E}_{\rho_\mathrm{src}^\pi} \!\left[\mathbb{E}_{P_\mathrm{src}}[r(s,a)] - \mathbb{E}_{P_\mathrm{tar}}[r(s,a)]\right] \notag\\
    & = \frac{\gamma}{1 - \gamma}  \mathbb{E}_{\rho_\mathrm{src}^\pi} \!\left[ \int_{s'_0} P_\mathrm{src}(s'_0|s,a) V_\mathrm{tar}^{\pi}(s'_0) - \int_{s'_0} P_\mathrm{tar}(s'_0|s,a) V_\mathrm{tar}^{\pi}(s'_0) ds'_0 \right] \notag\\
    & \le \frac{\gamma}{1 - \gamma}  \mathbb{E}_{\rho_\mathrm{src}^\pi} \!\left[ \int_{s'_0} (P_\mathrm{src}(s'_0|s,a) - P_\mathrm{tar}(s'_0|s,a)) \left| V_\mathrm{tar}^{\pi}(s'_0) \right| ds'_0 \right] \notag\\
    & \le \frac{\gamma r_{\mathrm{max}}}{(1 - \gamma)^2}  \mathbb{E}_{\rho_\mathrm{src}^\pi} \!\left[ \int_{s'_0} P_\mathrm{src}(s'_0|s,a) - P_\mathrm{tar}(s'_0|s,a) ds'_0 \right] \notag\\
    & = \frac{\gamma r_{\mathrm{max}}}{(1 - \gamma)^2}  \mathbb{E}_{\rho_\mathrm{src}^\pi} \!\left[ \int_{s'_{0:K}} P_\mathrm{src}(s'_{0:K}|s,a) - P_\mathrm{tar}(s'_{0:K}|s,a) ds'_{0:K} \right] \notag\\
    & = \frac{2 \gamma r_{\mathrm{max}}}{(1 - \gamma)^2}  \mathbb{E}_{\rho_\mathrm{src}^\pi} \!\left[ D_\mathrm{TV} (P_\mathrm{src}(s'_{0:K}|s,a) || P_\mathrm{tar}(s'_{0:K}|s,a)) \right] \notag\\
    & \le \frac{\sqrt{2}\gamma r_{\mathrm{max}}}{(1 - \gamma)^2} \mathbb{E}_{\rho_\mathrm{src}^\pi} \!\left[ \sqrt{ D_\mathrm{KL}(P_\mathrm{src}(s'_{0:K}|s,a) \Vert P_\mathrm{tar}(s'_{0:K}|s,a)) } \right] \tag{a}\\
    & = \frac{\sqrt{2}\gamma r_{\mathrm{max}}}{(1 - \gamma)^2} \mathbb{E}_{\rho_\mathrm{src}^\pi} \!\left[ \sqrt{ \mathbb{E} _{P_\mathrm{src}} \!\left[ \log \frac{P_\mathrm{src}(s'_{0:K}|s,a)}{P_\mathrm{tar}(s'_{0:K}|s,a)} \right] } \right] \notag\\
    & = \frac{\sqrt{2}\gamma r_{\mathrm{max}}}{(1 - \gamma)^2} \mathbb{E}_{\rho_\mathrm{src}^\pi} \!\left[ \sqrt{ \mathbb{E} _{P_\mathrm{src}} \!\left[ \log \frac{P_\mathrm{src}(s'_K | s, a)}{P_\mathrm{tar}(s'_K | s, a)} + \sum_{k=1}^{K} \log \frac{P_\mathrm{src} (s'_{k-1} | s'_k, s, a)}{P_\mathrm{tar}(s'_{k-1} | s'_k, s, a)} \right] } \right] \tag{b}\\
    & \le \frac{\sqrt{2}\gamma r_{\mathrm{max}}}{(1 - \gamma)^2} \mathbb{E}_{\rho_\mathrm{src}^\pi} \!\left[ \sqrt { \mathbb{E}_{P_\mathrm{src}} \!\left[ D_\mathrm{KL}(P_\mathrm{src}(s'_K | s, a) || P_\mathrm{tar}(s'_K | s, a)) \right] } \right] + \notag\\
    & \quad \frac{\sqrt{2}\gamma r_{\mathrm{max}}}{(1 - \gamma)^2} \mathbb{E}_{\rho_\mathrm{src}^\pi} \!\left[ \sqrt { \mathbb{E}_{P_\mathrm{src}} \!\left[ \sum_{k=1}^{K} D_\mathrm{KL}(P_\mathrm{src}(s'_{k-1} | s'_k, s, a) || P_\mathrm{tar}(s'_{k-1} | s'_k, s, a)) \right] } \right] \tag{c}
\end{align}
\end{subequations}
}
\end{minipage}%
}
\end{center}
where $D_\mathrm{TV} (P || Q)$ is the total variation distance between two distributions $P$ and $Q$, the step (a) holds by Pinsker's inequality, the step (b) holds by the Markov property, and the step (c) holds by the subadditivity of the square root function. The proof shows that the performance difference can be controlled by the distributional divergence of latent states in generative trajectories.

\section*{ACKNOWLEDGMENT}

The work was partially supported by NSF award \#2442477, \#2550203 and \#2536297. The views and conclusions in this paper should not be interpreted as representing any funding agencies.

\bibliography{IEEEexample}

@article{xu2023crossdomain,
  title={Cross-domain policy adaptation via value-guided data filtering},
  author={Xu, Kang and Bai, Chenjia and Ma, Xiaoteng and Wang, Dong and Zhao, Bin and Wang, Zhen and Li, Xuelong and Li, Wei},
  journal={Advances in Neural Information Processing Systems},
  volume={36},
  pages={73395--73421},
  year={2023}
}

@article{eysenbach2021offdynamics,
  title={Off-dynamics reinforcement learning: Training for transfer with domain classifiers},
  author={Eysenbach, Benjamin and Asawa, Swapnil and Chaudhari, Shreyas and Levine, Sergey and Salakhutdinov, Ruslan},
  journal={arXiv preprint arXiv:2006.13916},
  year={2020}
}

@inproceedings{lyu2024crossdomain,
  title={Cross-domain policy adaptation by capturing representation mismatch},
  author={Lyu, Jiafei and Bai, Chenjia and Yang, Jingwen and Lu, Zongqing and Li, Xiu},
  booktitle={Proceedings of the 41st International Conference on Machine Learning},
  pages={33638--33663},
  year={2024}
}

@article{ho2020denoising,
  title={Denoising diffusion probabilistic models},
  author={Ho, Jonathan and Jain, Ajay and Abbeel, Pieter},
  journal={Advances in neural information processing systems},
  volume={33},
  pages={6840--6851},
  year={2020}
}

@inproceedings{sohl2015deep,
  title={Deep unsupervised learning using nonequilibrium thermodynamics},
  author={Sohl-Dickstein, Jascha and Weiss, Eric and Maheswaranathan, Niru and Ganguli, Surya},
  booktitle={International conference on machine learning},
  pages={2256--2265},
  year={2015},
  organization={pmlr}
}

@inproceedings{zhao2020sim,
  title={Sim-to-Real Transfer in Deep Reinforcement Learning for Robotics: a Survey},
  author={Zhao, Wenshuai and Queralta, Jorge Pe{\~n}a and Westerlund, Tomi},
  booktitle={2020 IEEE symposium series on computational intelligence (SSCI)},
  pages={737--744},
  year={2020},
  organization={IEEE}
}

@article{da2025survey,
  title={A Survey of Sim-to-Real Methods in RL: Progress, Prospects and Challenges with Foundation Models},
  author={Da, Longchao and Turnau, Justin and Kutralingam, Thirulogasankar Pranav and Velasquez, Alvaro and Shakarian, Paulo and Wei, Hua},
  journal={arXiv preprint arXiv:2502.13187},
  year={2025}
}

@inproceedings{peng2018sim,
  title={Sim-to-real transfer of robotic control with dynamics randomization},
  author={Peng, Xue Bin and Andrychowicz, Marcin and Zaremba, Wojciech and Abbeel, Pieter},
  booktitle={2018 IEEE international conference on robotics and automation (ICRA)},
  pages={3803--3810},
  year={2018},
  organization={IEEE}
}

@inproceedings{mehta2020active,
  title={Active Domain Randomization},
  author={Mehta, Bhairav and Diaz, Manfred and Golemo, Florian and Pal, Christopher J and Paull, Liam},
  booktitle={Conference on Robot Learning},
  pages={1162--1176},
  year={2020},
  organization={PMLR}
}

@inproceedings{curtis2025flowbased,
  title={Flow-based Domain Randomization for Learning and Sequencing Robotic Skills},
  author={Curtis, Aidan and Li, Eric and Noseworthy, Michael and Gothoskar, Nishad and Chitta, Sachin and Li, Hui and Kaelbling, Leslie Pack and Carey, Nicole E},
  booktitle={Forty-second International Conference on Machine Learning},
  year={2025}
}

@inproceedings{chebotar2019closing,
  title={Closing the sim-to-real loop: Adapting simulation randomization with real world experience},
  author={Chebotar, Yevgen and Handa, Ankur and Makoviychuk, Viktor and Macklin, Miles and Issac, Jan and Ratliff, Nathan and Fox, Dieter},
  booktitle={2019 International Conference on Robotics and Automation (ICRA)},
  pages={8973--8979},
  year={2019},
  organization={IEEE}
}

@inproceedings{todorov2012mujoco,
  title={Mujoco: A physics engine for model-based control},
  author={Todorov, Emanuel and Erez, Tom and Tassa, Yuval},
  booktitle={2012 IEEE/RSJ international conference on intelligent robots and systems},
  pages={5026--5033},
  year={2012},
  organization={IEEE}
}

@article{brockman2016openai,
  title={Openai gym},
  author={Brockman, Greg and Cheung, Vicki and Pettersson, Ludwig and Schneider, Jonas and Schulman, John and Tang, Jie and Zaremba, Wojciech},
  journal={arXiv preprint arXiv:1606.01540},
  year={2016}
}

@article{haarnoja2018soft,
  title={Soft actor-critic algorithms and applications},
  author={Haarnoja, Tuomas and Zhou, Aurick and Hartikainen, Kristian and Tucker, George and Ha, Sehoon and Tan, Jie and Kumar, Vikash and Zhu, Henry and Gupta, Abhishek and Abbeel, Pieter and others},
  journal={arXiv preprint arXiv:1812.05905},
  year={2018}
}

@article{lyu2024odrl,
  title={Odrl: A benchmark for off-dynamics reinforcement learning},
  author={Lyu, Jiafei and Xu, Kang and Xu, Jiacheng and Yang, Jing-Wen and Zhang, Zongzhang and Bai, Chenjia and Lu, Zongqing and Li, Xiu and others},
  journal={Advances in Neural Information Processing Systems},
  volume={37},
  pages={59859--59911},
  year={2024}
}

@article{luo2018algorithmic,
  title={Algorithmic framework for model-based deep reinforcement learning with theoretical guarantees},
  author={Luo, Yuping and Xu, Huazhe and Li, Yuanzhi and Tian, Yuandong and Darrell, Trevor and Ma, Tengyu},
  journal={arXiv preprint arXiv:1807.03858},
  year={2018}
}

@article{xue2023state,
  title={State regularized policy optimization on data with dynamics shift},
  author={Xue, Zhenghai and Cai, Qingpeng and Liu, Shuchang and Zheng, Dong and Jiang, Peng and Gai, Kun and An, Bo},
  journal={Advances in neural information processing systems},
  volume={36},
  pages={32926--32937},
  year={2023}
}

@inproceedings{ge2023policy,
  title={Policy adaptation from foundation model feedback},
  author={Ge, Yuying and Macaluso, Annabella and Li, Li Erran and Luo, Ping and Wang, Xiaolong},
  booktitle={Proceedings of the IEEE/CVF Conference on Computer Vision and Pattern Recognition},
  pages={19059--19069},
  year={2023}
}

@article{pan2025crossdomain,
  title={Cross-Domain Reinforcement Learning Under Distinct State-Action Spaces Via Hybrid Q Functions},
  author={Pan, Kuan-Chen and Chen, MingHong and Huang, You-De and Liu, Xi and Hsieh, Ping-Chun}
}

@article{slaoui2019robust,
  title={Robust visual domain randomization for reinforcement learning},
  author={Slaoui, Reda Bahi and Clements, William R and Foerster, Jakob N and Toth, S{\'e}bastien},
  journal={arXiv preprint arXiv:1910.10537},
  year={2019}
}

@article{jiang2023variance,
  title={Variance reduced domain randomization for reinforcement learning with policy gradient},
  author={Jiang, Yuankun and Li, Chenglin and Dai, Wenrui and Zou, Junni and Xiong, Hongkai},
  journal={IEEE Transactions on Pattern Analysis and Machine Intelligence},
  volume={46},
  number={2},
  pages={1031--1048},
  year={2023},
  publisher={IEEE}
}

@article{nagabandi2018learning,
  title={Learning to adapt in dynamic, real-world environments through meta-reinforcement learning},
  author={Nagabandi, Anusha and Clavera, Ignasi and Liu, Simin and Fearing, Ronald S and Abbeel, Pieter and Levine, Sergey and Finn, Chelsea},
  journal={arXiv preprint arXiv:1803.11347},
  year={2018}
}

@article{wu2022zero,
  title={Zero-shot policy transfer with disentangled task representation of meta-reinforcement learning},
  author={Wu, Zheng and Xie, Yichen and Lian, Wenzhao and Wang, Changhao and Guo, Yanjiang and Chen, Jianyu and Schaal, Stefan and Tomizuka, Masayoshi},
  journal={arXiv preprint arXiv:2210.00350},
  year={2022}
}

@inproceedings{raychaudhuri2021cross,
  title={Cross-domain imitation from observations},
  author={Raychaudhuri, Dripta S and Paul, Sujoy and Vanbaar, Jeroen and Roy-Chowdhury, Amit K},
  booktitle={International conference on machine learning},
  pages={8902--8912},
  year={2021},
  organization={PMLR}
}

@article{fickinger2022crossdomain,
  title={Cross-domain imitation learning via optimal transport},
  author={Fickinger, Arnaud and Cohen, Samuel and Russell, Stuart and Amos, Brandon},
  journal={arXiv preprint arXiv:2110.03684},
  year={2021}
}

@article{wen2024contrastive,
  title={Contrastive representation for data filtering in cross-domain offline reinforcement learning},
  author={Wen, Xiaoyu and Bai, Chenjia and Xu, Kang and Yu, Xudong and Zhang, Yang and Li, Xuelong and Wang, Zhen},
  journal={arXiv preprint arXiv:2405.06192},
  year={2024}
}

@article{guo2024off,
  title={Off-dynamics reinforcement learning via domain adaptation and reward augmented imitation},
  author={Guo, Yihong and Wang, Yixuan and Shi, Yuanyuan and Xu, Pan and Liu, Anqi},
  journal={Advances in Neural Information Processing Systems},
  volume={37},
  pages={136326--136360},
  year={2024}
}

@article{van2024policy,
  title={Policy Learning for Off-Dynamics RL with Deficient Support},
  author={Van, Linh Le Pham and Tran, Hung The and Gupta, Sunil},
  journal={arXiv preprint arXiv:2402.10765},
  year={2024}
}

@article{lyu2024cross,
  title={Cross-domain policy adaptation by capturing representation mismatch},
  author={Lyu, Jiafei and Bai, Chenjia and Yang, Jingwen and Lu, Zongqing and Li, Xiu},
  journal={arXiv preprint arXiv:2405.15369},
  year={2024}
}

@article{chi2025diffusion,
  title={Diffusion policy: Visuomotor policy learning via action diffusion},
  author={Chi, Cheng and Xu, Zhenjia and Feng, Siyuan and Cousineau, Eric and Du, Yilun and Burchfiel, Benjamin and Tedrake, Russ and Song, Shuran},
  journal={The International Journal of Robotics Research},
  volume={44},
  number={10-11},
  pages={1684--1704},
  year={2025},
  publisher={Sage Publications Sage UK: London, England}
}

@article{kang2023efficient,
  title={Efficient diffusion policies for offline reinforcement learning},
  author={Kang, Bingyi and Ma, Xiao and Du, Chao and Pang, Tianyu and Yan, Shuicheng},
  journal={Advances in Neural Information Processing Systems},
  volume={36},
  pages={67195--67212},
  year={2023}
}

@article{he2023diffusion,
  title={Diffusion model is an effective planner and data synthesizer for multi-task reinforcement learning},
  author={He, Haoran and Bai, Chenjia and Xu, Kang and Yang, Zhuoran and Zhang, Weinan and Wang, Dong and Zhao, Bin and Li, Xuelong},
  journal={Advances in neural information processing systems},
  volume={36},
  pages={64896--64917},
  year={2023}
}

@article{lu2023synthetic,
  title={Synthetic experience replay},
  author={Lu, Cong and Ball, Philip and Teh, Yee Whye and Parker-Holder, Jack},
  journal={Advances in Neural Information Processing Systems},
  volume={36},
  pages={46323--46344},
  year={2023}
}

@article{wang2024diffusion,
  title={Diffusion actor-critic with entropy regulator},
  author={Wang, Yinuo and Wang, Likun and Jiang, Yuxuan and Zou, Wenjun and Liu, Tong and Song, Xujie and Wang, Wenxuan and Xiao, Liming and Wu, Jiang and Duan, Jingliang and others},
  journal={Advances in Neural Information Processing Systems},
  volume={37},
  pages={54183--54204},
  year={2024}
}

@article{zhu2024madiff,
  title={Madiff: Offline multi-agent learning with diffusion models},
  author={Zhu, Zhengbang and Liu, Minghuan and Mao, Liyuan and Kang, Bingyi and Xu, Minkai and Yu, Yong and Ermon, Stefano and Zhang, Weinan},
  journal={Advances in Neural Information Processing Systems},
  volume={37},
  pages={4177--4206},
  year={2024}
}

@article{van2025dmc,
  title={DmC: Nearest Neighbor Guidance Diffusion Model for Offline Cross-domain Reinforcement Learning},
  author={Van, Linh Le Pham and Nguyen, Minh Hoang and Kieu, Duc and Le, Hung and Tran, Hung The and Gupta, Sunil},
  journal={arXiv preprint arXiv:2507.20499},
  year={2025}
}
\bibliographystyle{IEEEtran}

\end{document}